\definecolor{uclablue}{RGB}{39, 116, 174}
\definecolor{bigaired}{RGB}{156, 0, 0}
\newcommand\DoToC{%
  \startcontents
  \hypersetup{linkcolor=black}
  \printcontents{}{1}{\textbf{\Large Contents}\vskip5pt\hrule\vskip5pt}
  \vskip3pt\hrule\vskip5pt
}
\theoremstyle{plain}
\newtheorem{theorem}{Theorem}[section]
\newtheorem{proposition}[theorem]{Proposition}
\newtheorem{lemma}[theorem]{Lemma}
\theoremstyle{definition}
\theoremstyle{remark}
\def\eqref#1{equation~\ref{#1}}
\def\1{\bm{1}}
\def\mQ{{\bm{Q}}}
\DeclareMathAlphabet{\mathsfit}{\encodingdefault}{\sfdefault}{m}{sl}
\SetMathAlphabet{\mathsfit}{bold}{\encodingdefault}{\sfdefault}{bx}{n}
\def\sX{{\mathbb{X}}}
\newcommand{\E}{\mathbb{E}}
\newcommand{\R}{\mathbb{R}}
\DeclareMathOperator*{\argmin}{arg\,min}
\newcommand\ie{{i.e.}}
\newacro{CTDMC}[CTDMC]{continuous time discrete Markov chain}
\newacro{CTDMB}[\textbf{DMB}]{\textbf{Discrete Markov Bridge}}
\title{Discrete Markov Bridge}
\author{%
Hengli Li$^{1,2,3}$ \\ \texttt{lihengli@stu.pku.edu.cn} \\
\And Yuxuan Wang$^{2,3}$ \\ \texttt{wangyuxuan1@bigai.ai} \\ 
\And Song-Chun Zhu$^{1,2,3,4}$ \\ \texttt{s.c.zhu@pku.edu.cn}\\
\And Ying Nian Wu$^{5}$\textsuperscript{\Letter} \\ \texttt{ywu@stat.ucla.edu} \\
\And Zilong Zheng$^{2,3}$\textsuperscript{\Letter}\\ \texttt{zlzheng@bigai.ai}\\ 
    \AND
    \normalfont 
    $^1$ Institute of Artificial Intelligence, Peking University \\
    $^2$ NLCo Lab, Beijing Institute for General Artificial Intelligence \\ 
    $^3$ State Key Laboratory of General Artificial Intelligence \\
    $^4$ Department of Automation, Tsinghua University \\
    $^5$ University of California, Los Angeles \\
}
\begin{document}

\maketitle

\begin{abstract}

Discrete diffusion has recently emerged as a promising paradigm in discrete data modeling. However, existing methods typically rely on a fixed-rate transition matrix during training, which not only limits the expressiveness of latent representations—a fundamental strength of variational methods—but also constrains the overall design space. To address these limitations, we propose \textbf{Discrete Markov Bridge}, a novel framework specifically designed for discrete representation learning. Our approach is built upon two key components: \emph{Matrix}-learning and \emph{Score}-learning. We conduct a rigorous theoretical analysis, establishing formal performance guarantees for \emph{Matrix}-learning and proving the convergence of the overall framework. Furthermore, we analyze the space complexity of our method, addressing practical constraints identified in prior studies. Extensive empirical evaluations validate the effectiveness of the proposed \textbf{Discrete Markov Bridge}, which achieves an Evidence Lower Bound (ELBO) of \textbf{1.38} on the Text8 dataset, outperforming established baselines. Moreover, the proposed model demonstrates competitive performance on the CIFAR-10 dataset, achieving results comparable to those obtained by image-specific generation approaches.\footnote{Implementation code is available at \url{https://github.com/Henry839/Discrete-Markov-Bridge}.} 
%

\end{abstract}
\section{Introduction}

\label{Introduction}
A fundamental question in generative modeling is estimating an underlying distribution, \(\mu\), from observed data and subsequently generating new samples from this distribution. Among the various generative models proposed, diffusion models have exhibited remarkable performance in both continuous \cite{song2021scorebased, ho2020denoising} and discrete domains \cite{campbell2022continuous, lou2024discrete}, demonstrating their versatility and effectiveness in diverse applications. These models effectively capture complex data distributions, enabling high-quality sample generation in various applications. However, despite their strong connection to variational models \cite{kingma2022autoencoding, oord2018neuraldiscreterepresentationlearning}, which are known for their impressive generative capabilities, diffusion models have yet to integrate the latent encoding ability inherent to variational approaches. Specifically, in the discrete domain, the noise rate transition matrices within discrete diffusion models are fixed and constrained, resulting in a limited design space and reduced expressive capacity. To the best of our knowledge, only the Absorb and Uniform Matrix \cite{campbell2022continuous, lou2024discrete, austin2021structure} have been considered in computations due to their simplicity in handling exponential term calculations.

In this study, we challenge the convention of using predefined static matrix in discrete modeling by introducing a novel approach, termed the \textbf{Discrete Markov Bridge (DMB)}, which aims to integrate the strengths of variational methods with discrete diffusion models,  offering a more robust and efficient solution for complex discrete-state systems. This methodology seeks to enhance the modeling capabilities by leveraging the theoretical foundations of variational inference within the framework of discrete diffusion processes. Specifically, DMB is structured as a bidirectional two-stage learning algorithm. It comprises a forward variational process, \ie, \textit{Matrix}-learning, that maps the data distribution to a learned distribution, followed by a backward decoding process, \ie, \textit{Score}-learning, that reconstructs the data distribution from the learned representation.

In the \emph{Matrix}-learning process, we propose a novel parameterized rate transition matrix that enhances the flexibility of the overall algorithm. This refinement allows for greater adaptability and improved performance in dynamic learning environments. The rate transition matrix is designed to be diagonalizable, ensuring high spatial efficiency while facilitating the rapid computation of matrix exponentials. 
On the other hand, in the \emph{Score}-learning process, a neural network is employed to model the concrete score \cite{lou2024discrete, meng2023concretescorematchinggeneralized}. This score serves a crucial role in the derivation of the backward rate transition matrix. As for the sampling procedure, the rate transition matrix derived from the \emph{Matrix}-learning process and the neural network obtained from the \emph{Score}-learning process are jointly employed to solve the backward differential equation.

Within this framework, a broad spectrum of tasks can be effectively addressed. For discrete data modalities such as text, the model supports non-autoregressive generation, following the approach outlined in \cite{gu2022nonautoregressive}. In this work, we demonstrate that our proposed method surpasses the performance of the previously established SEDD model \cite{lou2024discrete}. For image data, the model can be integrated with a VQ-VAE architecture \cite{oord2018neuraldiscreterepresentationlearning}, yielding performance on par with that of DDPM when evaluated on the CIFAR-10 dataset. \looseness=-1

We summarize our contributions as follows:

\begin{itemize}[leftmargin=*, topsep=0pt, noitemsep]

\item \textbf{Novel Framework for Discrete Data (\Cref{sec: ctdmb}):} We introduce the Discrete Markov Bridge, a new variational framework for learning discrete representations. By leveraging a variational formulation, this approach provides a novel method for modeling complex discrete data. 
\item \textbf{Theoretical Guarantee (\Cref{sec: theory}):} We present a theoretical guarantee for the \emph{Matrix}-learning process, covering both its validity and accessibility. Furthermore, we provide a comprehensive analysis of the entire framework, culminating in a formal convergence proof.
\item \textbf{Addressing Practical Issues (\Cref{sec: experiment}):} Building on the theoretical insights established earlier, we propose a computationally efficient matrix to tackle the practical challenges discussed in \Cref{sec: experiment}. We then evaluate the model’s performance through experiments, demonstrating that it outperforms baseline methods in text modeling and provides competitive image modeling results.
\end{itemize}

\begin{figure*}[t!]

    \centering
    \includegraphics[width=.8\linewidth]{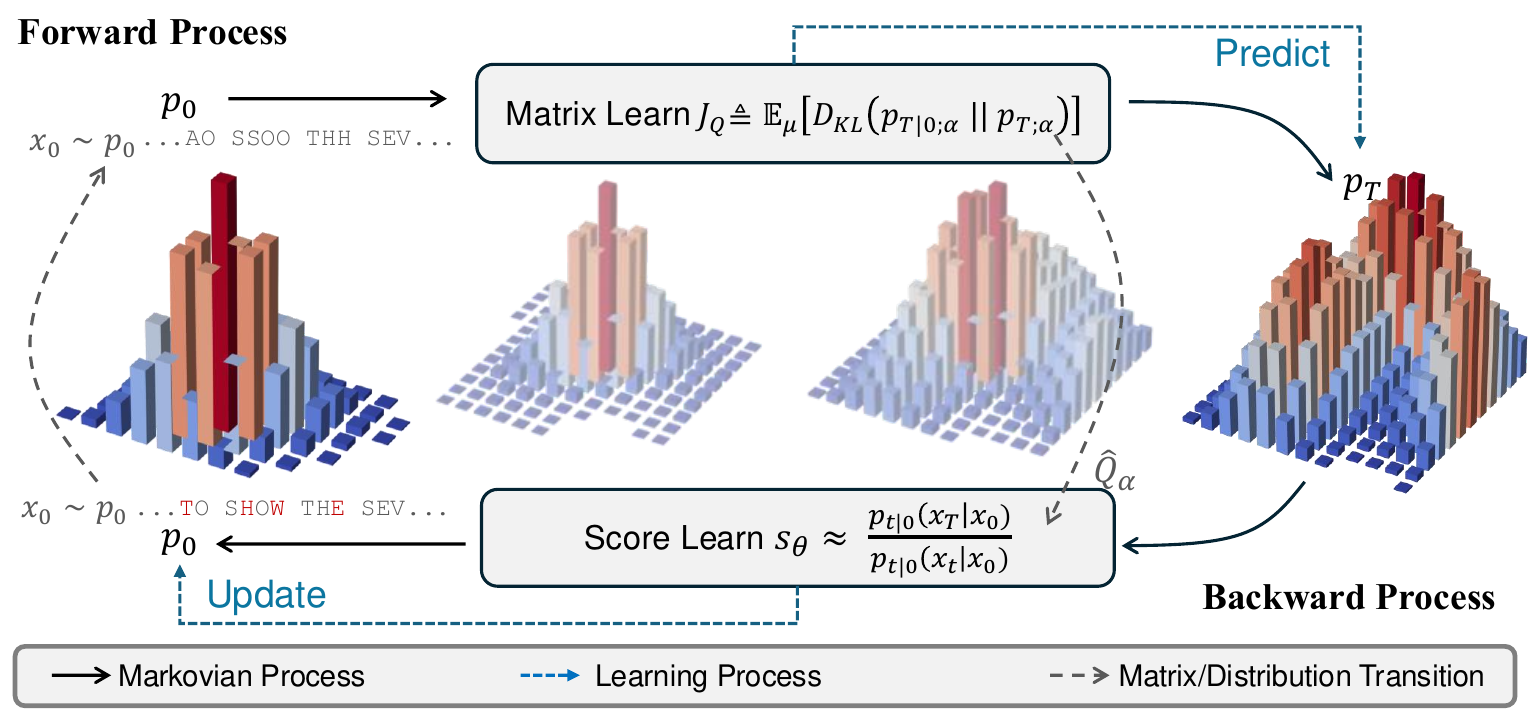}
    \caption{Overview of the \ac{CTDMB} framework. \ac{CTDMB} consists of two component: the \emph{Matrix}-learning and the \emph{Score}-learning. The \emph{Matrix}-learning process is designed to learn an adaptive transition rate matrix, which facilitates the estimation of an adapted latent distribution. Concurrently, the \emph{score}-learning process focuses on estimating the probability ratio necessary for constructing the inverse transition rate matrix, thereby enabling the reconstruction of the original data distribution.}
    \label{fig: DMB}
\end{figure*}

\section{Preliminaries and Related Works}\label{sec:related_work}
\subsection{Continuous-Time Discrete Markov Chain}

Let $\mathbb{X} = \{1,2, \ldots, n\}$ denote a finite state space, where $n \in \mathbb{R}$. A \ac{CTDMC} defined on $\mathbb{X}$ is represented as $\{X(t) \mid t \in \mathbb{R}, X(t) \in \sX\}$. For convenience, we use the notation $X_t \triangleq X(t)$. The probability of transitioning from state $x \in \mathbb{X}$ at time $t$ to state $y \in \mathbb{X}$ at time $t + s$ is denoted as  $p_{t+s|t}(y|x) \triangleq P(X_{t + s} = y \mid X_t = x)$. Similarly, the probability that \(X_t\) takes state \(x\) at time \(t\) is expressed as  $p_t(x) \triangleq P(X_t = x)$. The probability distribution over the state space at time \(t\) is then given by the vector  
$p_t \triangleq (p_t(1), p_t(2), \ldots, p_t(n))$. The core component to describe a continuous time discrete Markov chain is the rate transition matrix. We defined the rate transition probability as follows:
\begin{equation}
q_{t}(x,y) \triangleq \frac{dp_{t+s|t}(y|x)}{ds}  = \lim_{\Delta s \to 0} \frac{p_{t+s|t}(y|x) - p_{t|t}(y|x)}{\Delta s} = \lim_{\Delta s \to 0} \frac{p_{t+s|t}(y|x) - \delta_{x}(y) }{\Delta s},
\notag
\end{equation} where $\delta_x(y)$ is the Dirac delta function. The Forward Kolmogorov Equation can be written as 
$
    \frac{d p_t}{dt} = p_t Q^{(t)}
$.
The notation $Q^{(t)}_{x,y} \triangleq q_{t}(x,y)$, for all $x, y \in \mathbb{X}$, denotes the rate transition matrix at time $t$. The subscripts $x$ and $y$ indicate the row and column indices, respectively. Each rate transition matrix satisfies the conditions: the sum of each row must be zero, and all off-diagonal entries must be non-negative. Formally, this is expressed as $\sum_{y} Q_{x,y} = 0$ for all $x$ and $Q_{x,y} \geq 0$ for all $y \neq x$.

\subsection{Related Works}
\paragraph{Prior Learning}
Leveraging a prior is a longstanding paradigm in machine learning. In the field of natural language processing, for example, training typically begins with pretrained language models \cite{liu2019roberta, devlin2019bert, touvron2023llama, radford2019language, lan2020albert, hu2021lora, vaswani2023attention}. Likewise, pretrained models are highly valued in computer vision \cite{he2015deep}. In our approach, the concept of a prior is equally fundamental: the forward process adaptively refines this prior based on the evolving training dynamics of the backward process.

\paragraph{Discrete Diffusion Models} Diffusion models \cite{ho2020denoising, song2021scorebased, song2022denoising, sohldickstein2015deep} add noise to data and use a denoiser for reconstruction, achieving success in image tasks and gaining traction in discrete domains like natural language \cite{li2022diffusionlm, lou2024discrete, campbell2022continuous, gulrajani2023likelihoodbased, sun2023scorebased, dieleman2022continuous,nie2025largelanguagediffusionmodels}. Some methods map discrete data to continuous space \cite{li2022diffusionlm, gulrajani2023likelihoodbased}, introducing rounding errors, while others operate directly in discrete space but impose rigid, non-learnable noise structures \cite{campbell2022continuous, lou2024discrete}. In the continuous domain, trainable Gaussian parameters improve flexibility \cite{kingma2023variationaldiffusionmodels}, but no such method exists for discrete diffusion, where Gaussian distributions also remain restrictive. Moreover, masked discrete diffusion models struggle to learn temporal dependencies \cite{zheng2024maskeddiffusionmodelssecretly}. \looseness=-1

\paragraph{Flow Models}
Flow-based models \cite{rezende2016variational, kingma2018glow, liu2022flow, satorras2022en, albergo2023stochastic, trockman2021orthogonalizing} constitute a prominent class of machine learning models characterized by their ability to perform reversible transformations on data representations. In contrast to conventional flow models, which rely on transformation paths predefined by human designers \cite{albergo2023stochastic, liu2022flow}, our approach autonomously learns these paths, enhancing adaptability and expressiveness in data modeling. \looseness=-1

\section{Discrete Markov Bridge}

\label{sec: ctdmb}


The target distribution, denoted as \(\mu \in \mathbb{R}^n\), is a probability vector, meaning that its elements are non-negative and collectively sum to one. As shown in \Cref{fig: DMB}, our objective is to estimate the distribution at one endpoint of the Markov chain, denoted as $p_0$, such that $p_0 \approx \mu$. The other endpoint, denoted as $p_T$, serves as the distribution for the latent variables or prior. To achieve the specified objectives, the proposed \ac{CTDMB} framework is structured into two distinct components: \emph{Matrix Learning} and \emph{Score Learning}. 

The \emph{Matrix}-learning serves as a forward bridge, facilitating the transition from $\mu$ to the latent distribution. Conversely, the \emph{Score}-learning function delineates a reverse pathway from the latent distribution back to $\mu$, leveraging the groundwork established by the \emph{Matrix}-learning process. This dual-function framework ensures a comprehensive bidirectional understanding of the data structure, enhancing the robustness of the analytical model.

The structure of the \ac{CTDMB} is demonstrated in \Cref{algo: training}. This pseudocode illustrates two nested while loops that operate within the overarching while loop governing the training epochs. Each of these nested loops corresponds to a distinct learning stage within the framework, effectively organizing the training process into two phases. We list the following theorem to ensure the reversibility of the forward and backward Markovian processes.

\begin{theorem}[Reversibility \citep{campbell2022continuous, lou2024discrete}]
\label{thm: reversibility}
    Given the Forward Kolmogorov Equation of a \ac{CTDMC}:
    \begin{equation}
    \label{eq: forward}
        \frac{dp_t}{dt} = {p_t}Q^{(t)}
    \end{equation}
    There exists a reverse \ac{CTDMC} with Forward Kolmogorov Equation:
    \begin{equation}
    \label{eq: reverse}
        \frac{dp_{T-t}}{dt} = p_{T-t} \hat{Q}^{(T-t)} \text{ ,where $\hat{Q}^{(t)}_{x,y} = \frac{p_t(y)}{p_{t}(x)} Q^{(t)}_{y,x}$}
    \end{equation}
    
\end{theorem}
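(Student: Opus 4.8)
The plan is to verify directly that the reverse-time process defined by the matrix $\hat Q^{(t)}$ given in the statement is a genuine continuous-time discrete Markov chain, and that if we run it starting from $p_T$ at reverse-time $0$, its marginals at reverse-time $t$ coincide with the forward marginals $p_{T-t}$. Concretely, I would fix the notation $\pi_t \triangleq p_{T-t}$ for the candidate reverse marginals and show (i) each $\hat Q^{(t)}$ is a valid rate matrix, i.e. off-diagonal entries are non-negative and each row sums to zero, and (ii) $\pi_t$ solves the Forward Kolmogorov Equation $\frac{d\pi_t}{dt} = \pi_t \hat Q^{(T-t)}$, which is exactly \Cref{eq: reverse}.

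For step (i), non-negativity of the off-diagonal entries is immediate: for $x \neq y$ we have $\hat Q^{(t)}_{x,y} = \frac{p_t(y)}{p_t(x)} Q^{(t)}_{y,x}$, a product of a strictly positive ratio (assuming $p_t$ has full support, which I would note as a standing assumption — it holds automatically for $t>0$ under an irreducibility-type condition, or one can restrict to the support) and a non-negative off-diagonal entry of $Q^{(t)}$. The zero-row-sum condition is the key algebraic identity: $\sum_y \hat Q^{(t)}_{x,y} = \sum_{y} \frac{p_t(y)}{p_t(x)} Q^{(t)}_{y,x}$, and I need this to vanish. Writing the diagonal term separately, this is equivalent to $\sum_{y \neq x} p_t(y) Q^{(t)}_{y,x} = -p_t(x) Q^{(t)}_{x,x} = p_t(x)\sum_{y\neq x} Q^{(t)}_{x,y}$, i.e. to $\sum_y p_t(y) Q^{(t)}_{y,x} = 0$ for every $x$. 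But $\sum_y p_t(y) Q^{(t)}_{y,x}$ is precisely the $x$-th component of the row vector $p_t Q^{(t)} = \frac{dp_t}{dt}$ — wait, that is not zero in general. Here I have to be careful: the correct reading is that the diagonal of $\hat Q^{(t)}$ is \emph{defined} so as to make rows sum to zero, or equivalently $\hat Q^{(t)}_{x,x} = -\sum_{y\neq x}\hat Q^{(t)}_{x,y}$; the formula in the statement is understood to apply to off-diagonal entries, with the diagonal fixed by the rate-matrix constraint. With that convention step (i) is automatic once non-negativity is checked, so the only real content is step (ii).

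For step (ii), I would compute componentwise. On one hand, $\frac{d}{dt}\pi_t(x) = \frac{d}{dt} p_{T-t}(x) = -\big(p_{T-t} Q^{(T-t)}\big)_x = -\sum_y p_{T-t}(y)\, Q^{(T-t)}_{y,x}$, using \Cref{eq: forward} and the chain rule. On the other hand, $\big(\pi_t \hat Q^{(T-t)}\big)_x = \sum_y \pi_t(y)\, \hat Q^{(T-t)}_{x,y}$ — and here I must transpose correctly: $\hat Q^{(T-t)}_{y,x} = \frac{p_{T-t}(x)}{p_{T-t}(y)} Q^{(T-t)}_{x,y}$, so $\sum_y \pi_t(y)\hat Q^{(T-t)}_{y,x} = \sum_y p_{T-t}(y)\cdot \frac{p_{T-t}(x)}{p_{T-t}(y)} Q^{(T-t)}_{x,y} = p_{T-t}(x)\sum_y Q^{(T-t)}_{x,y}$. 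Since each row of $Q^{(T-t)}$ sums to zero this last sum is $0$; doing the same bookkeeping but keeping the off-diagonal/diagonal split explicit recovers exactly $-\sum_{y} p_{T-t}(y) Q^{(T-t)}_{y,x}$ after using the zero-row-sum of $\hat Q$, matching $\frac{d}{dt}\pi_t(x)$. So $\pi_t$ satisfies the claimed ODE with the claimed rate matrix, and since solutions of the linear Kolmogorov system are determined by the initial condition $\pi_0 = p_T$, the reverse chain with generator $\hat Q$ is well-defined and has the stated marginals.

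The main obstacle is purely bookkeeping: getting the row-versus-column indexing of $\hat Q^{(t)}_{x,y} = \frac{p_t(y)}{p_t(x)}Q^{(t)}_{y,x}$ consistent between the definition, the Kolmogorov equation $\frac{dp}{dt} = pQ$ (which uses left-multiplication by a row vector), and the time reversal $t \mapsto T-t$ (which flips the sign via the chain rule). I would handle this by carrying one fully-indexed scalar identity through the whole argument rather than manipulating matrices abstractly, and by stating the full-support assumption on $p_t$ up front so the ratio $p_t(y)/p_t(x)$ is well-defined. Everything else — non-negativity, the zero-row-sum identity, and uniqueness of the linear ODE — is routine.
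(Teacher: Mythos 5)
The paper does not actually prove this theorem: it is imported from the cited references (Campbell et al.; Lou et al.) and no proof appears in the appendices, so there is nothing internal to compare against. On its own merits, your direct verification is the standard argument and is sound. You correctly identify the two points that make it work: the formula $\hat Q^{(t)}_{x,y} = \frac{p_t(y)}{p_t(x)}Q^{(t)}_{y,x}$ applies to off-diagonal entries only, with $\hat Q^{(t)}_{x,x} \triangleq -\sum_{y\neq x}\hat Q^{(t)}_{x,y}$, and $p_t$ must have full support for the ratio to be defined. With that convention and $s = T-t$, the split computation gives $(\pi_t\hat Q^{(s)})_x = \sum_{y\neq x}p_s(y)\tfrac{p_s(x)}{p_s(y)}Q^{(s)}_{x,y} - p_s(x)\sum_{y\neq x}\tfrac{p_s(y)}{p_s(x)}Q^{(s)}_{y,x} = -p_s(x)Q^{(s)}_{x,x} - \sum_{y\neq x}p_s(y)Q^{(s)}_{y,x} = -\sum_y p_s(y)Q^{(s)}_{y,x}$, which matches $\tfrac{d}{dt}p_{T-t}(x)$ by the chain rule applied to \Cref{eq: forward}. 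One presentational caveat: the only computation you actually display in step (ii) is the naive one that applies the ratio formula to every entry including the diagonal and yields $0$; the correct split computation, which is the entire content of the theorem, is asserted but not written out. It is three lines (as above) and should be made explicit. Note also that this argument establishes the marginal ODE claimed in \Cref{eq: reverse}; identifying the resulting chain as the pathwise time-reversal of the forward chain would require the additional Bayes-rule argument on transition kernels, but the theorem as stated does not claim that.
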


This theorem elucidates the reverse form of a \ac{CTDMC}, proposing that knowledge of the probability ratio enables the derivation of a reversal of the original Markov chain that is almost everywhere equivalent. This assertion underscores the theoretical framework necessary to comprehend the conditions under which the reverse process mirrors the dynamics of the forward stochastic process.

We structure the learning process of the framework by employing the continuous-time Evidence Lower Bound (ELBO) as an alternative optimization objective to Maximum Likelihood Estimation (MLE). In the \ac{CTDMB} framework, both \emph{Matrix}-learning and \emph{Score}-learning collaboratively optimize distinct segments of the full bound through their respective subprocesses. 

\subsection{\emph{Matrix}-Learning}
\label{subsec: matrix learning in DMB}

In the \emph{Matrix}-learning process, our primary objective is to estimate the rate transition matrix $Q_{\alpha}$, where $\alpha$ denotes the set of model parameters. For simplicity, we assume that the forward rate transition matrix at time $t$, denoted $Q^{(t)}_{\alpha}$, is given by $\sigma(t)Q_{\alpha}$. Furthermore,  we employ the following $Q_\alpha$:\looseness=-1
\begin{equation}
Q_{\alpha} = 
A\begin{bmatrix}
    -\sum \limits_{i = 1}^{n - 1}a_i & a_1 & \ldots & a_{n-2} & a_{n-1}\\
    0 & -\sum \limits_{i = 2}^{n - 1}a_i & \ldots & a_{n-2} & a_{n-1}\\
    \ldots & \ldots & \ldots & \ldots & \ldots \\
    0 & 0 & \ldots & - a_{n-1} & a_{n-1} \\
    0 & 0 & \ldots & 0 & 0 
\end{bmatrix}A^{-1} := A H A^{-1}
\label{eq: form of Q alpha}
\end{equation}
, where $\{a_1, a_2, \ldots, a_{n-1}\} = \alpha$ are parameters for learning, $A, A^{-1}$ are fixed predefined permutation matrices and $H$ is introduced to stand for the upper-triangle matrix. 
The derivation and underlying rationale for utilizing this matrix are detailed in \Cref{sec: theory} and further explored in \Cref{subsec: practical}.
Another essential component of this process is $\mu$, which is approximated using the currently predicted $p_0$ obtained through \emph{Score}-learning as a prior (see \Cref{subsec: score learning}). By integrating \Cref{eq: forward} from time $0$ to time $t$, the following equation can be derived:
\begin{equation}
\label{eq: evolve}
p_t = p_{0}\exp\{ \int_{0}^{t} \sigma(s) ds Q_{\alpha}\}
\end{equation}

Note that the exponential in the formula is a matrix exponential. The training procedure aims to minimize a portion of the variational bound, leading to the following objective function $J_Q$:
\begin{equation}
    J_Q \triangleq \mathbb{E}_\mu[D_{KL}(p_{T|0;\alpha}||p_{T;\alpha})],
    \label{eq:loss_Q}
\end{equation}
 where the conditional probability distribution $p_{T|0;\alpha}$ is given by the rows of $\exp\{\int_{0}^{T} \sigma(s) ds Q_{\alpha}\}$: 
\begin{equation}
    p_{T|0; \alpha}(x_T|x_0) = \exp\{\int_0^T \sigma(s) ds Q_\alpha\}_{x_0, x_T}
\end{equation}

The final distribution $p_{T;\alpha}$ is obtained by multiplying the initial distribution $p_{0}$ with the conditional distribution, as presented in \Cref{eq: evolve}, evaluated at time $t = T$.

\begin{algorithm}[H]
\caption{Training Algorithm of \ac{CTDMB}}
\textbf{Input: } Target discrete data $X\sim \mu$
\begin{algorithmic}[1]
 \STATE Initialize $p_0, p_T \gets$ \emph{random\_init}()
 \WHILE{\textit{not converge}} 
  \STATE Sample a batch of discrete instance $X_0 \sim \mu$.
  \COMMENT{Data for the two learning processes.} \\
\COMMENT{Matrix Learning}
 \STATE $step \gets 0$
 \WHILE{$step \leq max\_step \And \mathcal{L}_Q \geq \epsilon_Q$}
 \STATE Update $Q_\alpha, \mathcal{J}_Q$ according to Eqn. (\ref{eq:loss_Q}) and predict $p_T$ using Eqn. (\ref{eq: evolve}) at $t=T$.

  \STATE $step \gets step + 1$
 \ENDWHILE \\
 \COMMENT{Score Learning}
  \STATE $step \gets 0$
  \WHILE{$step \leq max\_step \And \mathcal{J}_{score} \geq \epsilon_{score}$}
 \STATE Update $s_\theta, \mathcal{J}_{score}$ \textit{w.r.t.}  current $Q_\alpha$ using Eqn. (\ref{eq: score entropy loss}).
 
 \STATE $step \leftarrow step + 1$
 \ENDWHILE
\STATE Predict updated $p_0$ that estimates $\mu$ using Eqn. (\ref{eq: mu estimation}). \COMMENT{Used for Matrix Learning}
 \IF {$ \mathcal{J_Q} + \mathcal{J}_{score} < \epsilon$}
 \STATE \textit{converge} $\gets$ TRUE
 \ENDIF
 \ENDWHILE

\end{algorithmic}
\caption{Training Algorithm of the \ac{CTDMB}}
\label{algo: training}
\end{algorithm}

\subsection{\emph{Score}-learning}
\label{subsec: score learning}
\emph{Score}-learning constitutes a reverse process of \emph{Matrix}-learning. It is noted that in \Cref{thm: reversibility}, the reverse rate transition matrix adheres to the following relationship:
\begin{equation}
\label{eq: reverse rate transition matrix}
    \hat{Q}^{(t)}_{x, y} = \frac{p_t(y)}{p_t(x)} Q_{x, y} \sigma(t)
\end{equation}
Consequently, while \emph{Matrix}-learning handles the forward rate transition matrix $Q$, \emph{Score}-learning focuses on managing the remaining part, i.e $\frac{p_t(y)}{p_t(x)}$. A learnable model $s_\theta(x_t,t)_y$
is designed to model the ratio, and the main part of the continuous time Evidence Lower Bound (ELBO) \citep{campbell2022continuous, lou2024discrete, kingma2022autoencoding} is leveraged as the training objective, denoted as $J_{score}$:
\begin{equation}
\label{eq: score entropy loss}
\resizebox{\linewidth}{!}{

$\int_{0}^{T} \mathbb{E}_{x_0 \sim \mu, x_t \sim p_{t|0}}\bigl[\sum \limits_{y \neq x_t} Q^{(t)}_{y, x_t}\Biggl(s_\theta(x_t, t)_{y} - \frac{p_{t|0}(y|x_0)}{p_{t|0}(x_t|x_0)} \\ 
 + \frac{p_{t|0}(y|x_0)}{p_{t|0}(x_t|x_0)} \bigl( \log(\frac{p_{t|0}(y|x_0)}{p_{t|0}(x_t|x_0)}) - \log s_\theta(x_t, t)_y \bigl)\Biggl)\bigl] dt$
 }
\end{equation}

To provide a comprehensive understanding, we present the complete ELBO as follows, demonstrating how \emph{Matrix}-learning and \emph{Score}-learning collaboratively contribute to minimizing the ELBO bound.
\begin{equation}
\label{ineq: full ELBO bound}
\mathbb{E}_{x_0 \sim \mu}[-\log p_{0; \theta}(x_0)] \leq J_{score} + J_Q.
\end{equation}
\paragraph{Estimating $\mu$}
The estimation of $\mu$ is expressed as \Cref{eq: mu estimation}. The equation below is derived under the Euler method and can be generalized to other ODE-solving methods. Suppose the inference time process is partitioned as: $[0, t_1], [t_1, t_2], \ldots, [t_n, T]$. By Bayesian rules:
\begin{equation}
\label{eq: mu estimation}
        \mu(x_0) \approx p_0(x_0) = \mathbb{E}_{X_T, X_{n}, \ldots X_{1}}[p_{0|1}(x_0 | x_1)].
\end{equation}
Under the guidance of \Cref{eq: mu estimation}, the sampling process begins with drawing $x_T$, followed by obtaining $x_n$ through the conditional distribution $p_{t_n|T}(x_n | X_T = x_T)$. This procedure continues iteratively, generating $x_{n-1}$, and proceeding sequentially until the complete sequence $\{X_T, X_n, \dots, X_1\}$ is sampled. Subsequently, the conditional probability $p_{0|1}(x_0 | x_1)$ is determined. By repeating this process multiple times and averaging the sampled probabilities, an estimation can be obtained by approximating the expectation with the empirical mean.

\subsection{Sampling}
The sampling process is done under the cooperation of \emph{Matrix}-learning and \emph{Score}-learning in a similar way as estimating $\mu$. The reverse rate transition matrix is calculated as \Cref{eq: reverse rate transition matrix}, and an ode-solving method such as the Euler method can be further applied to solve \Cref{eq: reverse}. Noticed that, as shown in line 15 of \Cref{algo: training}, the sampling process is performed every time after the \emph{Score}-learning process to gain the estimation of $\mu$ and samples for evaluation.

\section{Theoretical Foundations}
\label{sec: theory}
\subsection{Validity and Accessibility of \emph{Matrix}-learning}
The validity and accessibility of the backward process are established by \Cref{thm: reversibility}. In this subsection, we extend our analysis to the same aspects of the \emph{Matrix}-learning process. Specifically, \textbf{validity} concerns the ultimate state of the forward process and whether it remains confined within a well-defined domain, i.e., whether a probability distribution transforms into another valid probability distribution. \textbf{Accessibility}, on the other hand, pertains to the ability of the process to transition between any two arbitrary discrete distributions, thereby characterizing the reachability and adaptability of the \emph{Matrix}-learning process.

\paragraph{Validity} \Cref{prop: conservation}, presented below, establishes that any transformation originating from a probability distribution must result in another probability distribution. This theorem guarantees that, despite the presence of errors in the learning process, the outcome remains a valid probability distribution. For a detailed proof, refer to \Cref{supp: sec: proof_conservation}.
\begin{proposition}[Conservation of the Sum]
\label{prop: conservation}
For two arbitrary vectors $\phi, \mu \in \R^{n}$, rate transition matrix $Q \in \mathcal{R}^{n \times n}$, if $\phi = \mu \exp\{Q\}$, then 
$$
\sum_{i = 1}^n \phi[i] = \sum_{i = 1}^n \mu[i]
$$
\end{proposition}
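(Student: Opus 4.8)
The plan is to show that the row-sum-zero property of any rate transition matrix $Q$ is inherited by its matrix exponential in the following sense: $\exp\{Q\}$ has all row sums equal to $1$, i.e. $\exp\{Q\}\mathbf{1} = \mathbf{1}$, where $\mathbf{1}$ is the all-ones column vector. Granting this, the claim follows immediately: $\sum_i \phi[i] = \phi \mathbf{1} = \mu \exp\{Q\}\mathbf{1} = \mu \mathbf{1} = \sum_i \mu[i]$.

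To establish $\exp\{Q\}\mathbf{1} = \mathbf{1}$, I would start from the defining property of a rate transition matrix recalled in the preliminaries, namely $\sum_y Q_{x,y} = 0$ for every $x$, which in matrix form says $Q\mathbf{1} = \mathbf{0}$. Then I would argue by one of two equivalent routes. The direct route uses the power series $\exp\{Q\} = \sum_{k=0}^\infty \frac{1}{k!} Q^k$: since $Q\mathbf{1} = \mathbf{0}$, an easy induction gives $Q^k \mathbf{1} = \mathbf{0}$ for all $k \geq 1$, while $Q^0 \mathbf{1} = \mathbf{1}$, so term-by-term application yields $\exp\{Q\}\mathbf{1} = \mathbf{1} + \sum_{k \geq 1} \frac{1}{k!}\mathbf{0} = \mathbf{1}$. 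The interchange of the infinite sum with the (finite-dimensional) matrix-vector product is justified by absolute convergence of the exponential series. Alternatively, one can invoke the ODE characterization: the curve $t \mapsto \exp\{tQ\}\mathbf{1}$ solves $\frac{d}{dt} v(t) = Q v(t)$ with $v(0) = \mathbf{1}$, and since $Q\mathbf{1} = \mathbf{0}$ the constant curve $v(t) \equiv \mathbf{1}$ is also a solution, so by uniqueness they coincide; evaluating at $t = 1$ gives the result.

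There is no real obstacle here — the statement is essentially the observation that conservation of total probability in a continuous-time Markov chain is encoded by $Q\mathbf{1} = \mathbf{0}$. The only point requiring a line of care is the justification for moving the matrix multiplication inside the infinite series, which is standard (the partial sums converge in any matrix norm and matrix-vector multiplication is continuous). I would present the power-series argument as the main proof since it is the shortest and most self-contained, and perhaps remark that the same conclusion also follows from uniqueness of solutions to the linear ODE in \Cref{eq: forward}. Note the proposition as stated does not even require $\mu,\phi$ to be probability vectors or $Q$ to have nonnegative off-diagonal entries — only the zero-row-sum condition is used — so the proof will not invoke those hypotheses.
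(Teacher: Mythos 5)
Your proof is correct and follows essentially the same route as the paper: both reduce the claim to the fact that the rows of $\exp\{Q\}$ sum to one and then exchange the order of summation. In fact your write-up is more complete than the paper's, which simply asserts $\sum_{j}(\exp\{Q\})_{i,j}=1$ without justification, whereas you derive it from $Q\mathbf{1}=\mathbf{0}$ via the power series (or the ODE uniqueness argument); either justification is standard and fine.
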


\paragraph{Accessibility}
\Cref{thm: accessibility} ensures that any two probability distributions are accessible in the forward process. Consequently, this implies that the optimality of \textit{Matrix}-learning can be achieved, provided the presence of a strong optimizer.

\begin{theorem}[Accessibility]
\label{thm: accessibility}
For two arbitrary discrete distributions $p,q \in \mathbb{R}^{n}$, there exists a rate transition matrix $Q \in \mathbb{R}^{n \times n}$ such that:
\begin{equation}
\label{eq: accessibility equation}
   p = q e^Q
\end{equation}

\end{theorem}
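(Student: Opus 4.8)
The plan is to write down an explicit rate transition matrix of ``rank‑one plus scalar'' type. Fix a scalar $c>0$, let $w \triangleq (p - e^{-c}q)/(1-e^{-c}) \in \R^n$, let $R$ be the $n\times n$ matrix each of whose rows equals $w$, and set $Q \triangleq c(R - I_n)$. The first thing I would check is that $c$ can be chosen so that $w$ is a genuine probability vector: its entries sum to $\bigl(\sum_i p_i - e^{-c}\sum_i q_i\bigr)/(1-e^{-c}) = 1$, and they are all nonnegative as soon as $e^{-c}\le p_i/q_i$ for every $i$ with $q_i>0$. Such a $c$ exists exactly when $p_i>0$ whenever $q_i>0$, so I would read \Cref{thm: accessibility} under this support hypothesis (it holds automatically when $p$ has full support, which is the case of interest). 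For such $w$, the matrix $Q$ is a legitimate rate transition matrix: its off‑diagonal entries $c\,w_j$ are nonnegative, its diagonal entries $c(w_i-1)$ are nonpositive, and each of its rows sums to $c\bigl(\sum_j w_j-1\bigr)=0$.

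The second step is to compute $e^Q$ in closed form. Since the rows of $R$ are all equal to the probability vector $w$, one has $R^2=R$, hence $R^k=R$ for all $k\ge 1$, so $e^{cR}=I_n+(e^c-1)R$; because $cR$ and $-cI_n$ commute, $e^Q=e^{-c}e^{cR}=e^{-c}I_n+(1-e^{-c})R$. Left‑multiplying by the row vector $q$ and using $qR=w$ (again because every row of $R$ is $w$ and $q$ sums to one) yields $q e^Q=e^{-c}q+(1-e^{-c})w=e^{-c}q+(p-e^{-c}q)=p$, which is \Cref{eq: accessibility equation}. By \Cref{prop: conservation} the output is automatically a probability vector, consistent with $p$ being a distribution.

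The only genuinely delicate point — the ``main obstacle'' — is the positivity bookkeeping in the first step: $c$ has to be picked after inspecting $p$ and $q$, and a hypothesis such as $\mathrm{supp}(q)\subseteq\mathrm{supp}(p)$ cannot be removed, since $(e^Q)_{ii}\ge e^{Q_{ii}}>0$ forces the $i$‑th coordinate of $q e^Q$ to be strictly positive whenever $q_i>0$. Everything else is a one‑line matrix‑exponential identity. Finally, I would remark that the generator constructed here is not of the diagonalizable upper‑triangular form $AHA^{-1}$ of \Cref{eq: form of Q alpha} — it has eigenvalue $0$ once and $-c$ with multiplicity $n-1$ — but for the bare existence claim of \Cref{thm: accessibility} this simpler object suffices; fitting the parametrized family of \Cref{eq: form of Q alpha} is the separate expressivity concern taken up in \Cref{subsec: practical}.
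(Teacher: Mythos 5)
Your construction is correct, but it is a genuinely different route from the paper's. The paper proves \Cref{thm: accessibility} by first sorting the coordinates so that the cumulative ratios $\sum_{i\le k}p_i'/\sum_{i\le k}q_i'$ are nondecreasing (\Cref{lemma:ineq_of_mass}), then taking $Q=AU\Lambda U^{-1}A^{-1}$ with $U$ the all-ones upper-triangular matrix (\Cref{lemma: Q}): in the basis given by $U$ the equation $p=qe^{Q}$ decouples into $n$ scalar equations on cumulative sums, which are solved explicitly by $a_k=\ln\frac{\sum_{i\le k+1}p_i'}{\sum_{i\le k+1}q_i'}-\ln\frac{\sum_{i\le k}p_i'}{\sum_{i\le k}q_i'}\ge 0$, and \Cref{lemma: rate} transports the rate-matrix property through the permutation. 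Your rank-one idempotent generator $Q=c(R-I)$ with $e^{Q}=e^{-c}I+(1-e^{-c})R$ is shorter and needs no sorting lemma or triangular eigendecomposition, so as a pure existence proof it is cleaner; what it does not buy is the specific $(n-1)$-parameter family $Q_\alpha=AHA^{-1}$ of \Cref{eq: form of Q alpha}, which the paper's constructive proof is really designed to exhibit (minimal parameter count, diagonal matrix exponential, and the storage bounds of \Cref{subsec: practical}) — you correctly flag this yourself. One further point in your favor: your observation that $(e^{Q})_{ii}\ge e^{Q_{ii}}>0$ forces $\mathrm{supp}(q)\subseteq\mathrm{supp}(p)$ shows the theorem cannot hold for literally arbitrary $p,q$; the paper's own proof has the same implicit hypothesis (its $a_k$ involve logarithms and ratios of cumulative sums that are ill-defined when entries of $p'$ or $q'$ vanish), so making the support condition explicit, as you do, is a genuine improvement rather than a restriction peculiar to your construction.
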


The central idea of the proof is to construct a specialized matrix that possesses strong representational capacity while remaining computationally manageable within the framework of matrix exponentiation. The designed matrix, which is depicted in Lemma \ref{lemma: Q}, is an upper triangle matrix with the vanished sum of rows. A remarkable characteristic of this matrix is its elegant eigendecomposition form, which presents a well-structured and analytically convenient representation. Its eigenmatrix is an all-one upper triangular matrix, as shown in \Cref{lemma: Q}.

\begin{lemma}
\label{lemma: Q}
Let matrix $Q \in \mathbb{R}^{n \times n}$ and hold the following form:
\[
Q = H
\]

, where $H$ is defined in \Cref{eq: form of Q alpha}, then $Q$ can be diagonalized in the following form:
$$
Q = U \Lambda U^{-1}
$$,where $U = \begin{bmatrix}
   1  & 1  & \ldots & 1\\
    0  & 1  & \ldots & 1\\
    \ldots  & \ldots & \ldots & \ldots \\
    0 & 0  & \ldots & 1
\end{bmatrix}$, $\Lambda = diag(\{-\sum \limits_{i = 1}^{n - 1}a_i, -\sum \limits_{i = 2}^{n - 1}a_i, \ldots, - a_{n-1}, 0\})$.
\end{lemma}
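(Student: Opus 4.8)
The plan is to verify the claimed diagonalization $Q = U\Lambda U^{-1}$ directly, by checking that each column of $U$ is an eigenvector of $Q = H$ with the stated eigenvalue. Concretely, write $U = [u_1 \mid u_2 \mid \cdots \mid u_n]$ where $u_j$ is the vector whose first $j$ entries are $1$ and whose remaining entries are $0$. I would prove that $H u_j = \lambda_j u_j$ with $\lambda_j = -\sum_{i=j}^{n-1} a_i$ for $j = 1, \ldots, n-1$ and $\lambda_n = 0$. Since the eigenvectors $u_1, \ldots, u_n$ are manifestly linearly independent (the matrix $U$ is upper triangular with $1$'s on the diagonal, hence invertible), this establishes both that $U^{-1}$ exists and that $Q = U\Lambda U^{-1}$, completing the proof.

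The key computation is the matrix–vector product $H u_j$. First I would read off the entries of $H$ from \Cref{eq: form of Q alpha}: for row $k < n$, the diagonal entry is $H_{k,k} = -\sum_{i=k}^{n-1} a_i$, and the off-diagonal entries for $\ell > k$ are $H_{k,\ell} = a_{\ell - 1}$ (and $H_{k,\ell} = 0$ for $\ell < k$); row $n$ is entirely zero. The $k$-th component of $H u_j$ is $\sum_{\ell=1}^{j} H_{k,\ell}$. I would split into the cases $k \le j$ and $k > j$. For $k > j$: every nonzero entry in row $k$ lies in columns $\ge k > j$, so $\sum_{\ell=1}^{j} H_{k,\ell} = 0$, which matches $\lambda_j (u_j)_k = 0$ since $(u_j)_k = 0$. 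For $k \le j$ (and $k < n$): $\sum_{\ell=1}^{j} H_{k,\ell} = H_{k,k} + \sum_{\ell=k+1}^{j} a_{\ell-1} = -\sum_{i=k}^{n-1} a_i + \sum_{i=k}^{j-1} a_i = -\sum_{i=j}^{n-1} a_i = \lambda_j$, which matches $\lambda_j (u_j)_k = \lambda_j$ since $(u_j)_k = 1$. The case $k = n$ only arises when $j = n$, where $u_n = (1,\ldots,1)^\top$, row $n$ of $H$ is zero, so $H u_n = 0 = \lambda_n u_n$; and for the other columns one checks the telescoping sum reduces correctly (for $j = n$ and $k < n$ the sum is $-\sum_{i=k}^{n-1}a_i + \sum_{i=k}^{n-1}a_i = 0 = \lambda_n$).

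There is no substantial obstacle here — the argument is a bookkeeping exercise in telescoping sums once the index conventions for the entries of $H$ are pinned down. The one place to be careful is the indexing of the superdiagonal parameters: the structure of $H$ is such that column $\ell$ (for $\ell \ge 2$) has the constant value $a_{\ell-1}$ in every row above the diagonal, and the diagonal entries are partial tail-sums $-\sum_{i=k}^{n-1} a_i$; getting these offsets right is what makes the telescoping $-\sum_{i=k}^{n-1} a_i + \sum_{i=k}^{j-1} a_i = -\sum_{i=j}^{n-1} a_i$ work out. I would also remark that this is precisely the reason matrix $Q_\alpha$ in \Cref{eq: form of Q alpha} was chosen: the eigenbasis $U$ is fixed (independent of $\alpha$), so the matrix exponential in \Cref{eq: evolve} reduces to exponentiating the diagonal $\Lambda$, i.e.\ $\exp\{cQ_\alpha\} = A U \exp\{c\Lambda\} U^{-1} A^{-1}$, giving the spatial and computational efficiency claimed in \Cref{subsec: matrix learning in DMB}.
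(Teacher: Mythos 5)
Your proof is correct and is essentially the same argument as the paper's: a direct verification of the factorization $H = U\Lambda U^{-1}$. The only cosmetic difference is that the paper simply exhibits $U^{-1}$ explicitly (the unit upper bidiagonal matrix with $-1$'s on the superdiagonal) and displays the triple product, whereas you verify $HU = U\Lambda$ column by column via the telescoping sums — your write-up is in fact the more complete of the two.
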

There are two key observations regarding the $Q$ matrix. First, it contains only $n-1$ parameters, which constitute the minimal set necessary to solve \Cref{eq: accessibility equation}. This sufficiency implies that the solution derived for the $Q$ matrix is unique. Second, the matrix retains nonzero elements exclusively in its upper triangular portion, implying that each element can transition only to those with a larger index. This observation raises an additional consideration: for effective state transitions, the matrix must allocate sufficient ``mass" or probability. Consequently, a matrix is required to appropriately adjust the indices of elements within the finite set \(\mathbb{X}\), as shown in \Cref{lemma:ineq_of_mass}. \Cref{lemma:ineq_of_mass} establishes that, after a permutation, the cumulative probability at each element of the initial distribution in the transition process is greater than or equal to that of the target distribution. This guarantees that elements with surplus probability can redistribute their excess, while those with a deficiency can receive the necessary adjustments, ensuring a balanced transformation.
\begin{lemma}
\label{lemma:ineq_of_mass}
For arbitrary distribution $p, q \in \mathbb{R}^{n}$, there exists an permutation matrix $A$ such that:
\begin{equation}
   \frac{p_1'}{q_1'} \leq \frac{p_1' + p_2'}{q_1' + q_2'} \leq \ldots \leq \frac{\sum \limits_{i = 1}^k p_i'}{\sum \limits_{i = 1}^k q_i'} \leq \ldots \leq \frac{\sum \limits_{i = 1}^n p_i'}{\sum \limits_{i = 1}^n q_i'} = 1
\label{eq: ineq_of_mass} 
\end{equation}
where $p' = p A$, $q'=qA$, $p_i'$ is the $i$-th entry of $p'$, and $q_i'$ is the $i$-th entry of $q'$.
\end{lemma}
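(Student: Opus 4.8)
The plan is to choose the permutation $A$ by sorting the indices according to the ratios $p_i/q_i$ in increasing order (handling the degenerate case $q_i = 0$ by placing such indices last, or treating $p_i/q_i = +\infty$ consistently). Concretely, let $\pi$ be a permutation of $\{1,\dots,n\}$ with $p_{\pi(1)}/q_{\pi(1)} \le p_{\pi(2)}/q_{\pi(2)} \le \dots \le p_{\pi(n)}/q_{\pi(n)}$, and let $A$ be the associated permutation matrix so that $p'_i = p_{\pi(i)}$ and $q'_i = q_{\pi(i)}$. The final equality $\sum_{i=1}^n p'_i / \sum_{i=1}^n q'_i = 1$ is immediate since both $p$ and $q$ are probability distributions summing to one (and the permutation does not change the total sum, cf. \Cref{prop: conservation} in spirit).

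The core is the chain of inequalities, and for this I would rely on the elementary "mediant inequality": for positive reals, if $a/b \le c/d$ then $a/b \le (a+c)/(b+d) \le c/d$. I would prove by induction on $k$ that $\frac{\sum_{i=1}^k p'_i}{\sum_{i=1}^k q'_i} \le \frac{\sum_{i=1}^{k+1} p'_i}{\sum_{i=1}^{k+1} q'_i}$. The inductive step works because $\frac{\sum_{i=1}^k p'_i}{\sum_{i=1}^k q'_i}$ is a weighted mediant of the individual ratios $p'_1/q'_1,\dots,p'_k/q'_k$, hence lies between the smallest and largest of them; in particular it is $\le p'_k/q'_k \le p'_{k+1}/q'_{k+1}$ by the sorting, and then the mediant inequality applied to $\frac{\sum_{i=1}^k p'_i}{\sum_{i=1}^k q'_i}$ and $\frac{p'_{k+1}}{q'_{k+1}}$ gives that inserting $p'_{k+1}, q'_{k+1}$ into numerator and denominator cannot decrease the running ratio. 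Chaining these $n-1$ inequalities yields \Cref{eq: ineq_of_mass}.

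The main obstacle is bookkeeping around zeros: if some $q_i = 0$ then the ratio $p_i/q_i$ is not defined and some partial sums $\sum_{i=1}^k q'_i$ may vanish, making intermediate fractions ill-defined. I would handle this by first arguing that on the support where $q_i = 0$ we may need $p_i = 0$ as well for a meaningful statement, or—more robustly—by noting that the statement is really about the ordering of cumulative sums and can be phrased without division: it suffices to show $p'_1 q'_2 \le p'_2 q'_1$-type cross-multiplied inequalities, or to restrict to $q$ with full support and obtain the general case by a limiting/continuity argument. A clean alternative is to define the sort key as $p_i/q_i$ with the convention $x/0 = +\infty$ for $x>0$ and $0/0$ broken arbitrarily, place all zero-$q$ indices at the end, and verify the cross-multiplied form of each inequality directly; this avoids ever dividing by zero while still delivering \Cref{eq: ineq_of_mass} in the cases where the fractions are defined. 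Apart from this edge case, the argument is routine.
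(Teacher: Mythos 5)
Your proposal follows essentially the same route as the paper's own proof: sort the indices by the ratio $p_i/q_i$, invoke the mediant inequality $\frac{a_1}{b_1} \leq \frac{a_2}{b_2} \Rightarrow \frac{a_1}{b_1} \leq \frac{a_1+a_2}{b_1+b_2} \leq \frac{a_2}{b_2}$, and induct on $k$ to show $\frac{\sum_{i=1}^k p_i'}{\sum_{i=1}^k q_i'} \leq \frac{p_{k+1}'}{q_{k+1}'}$. Your additional handling of zero entries of $q$ (via the $+\infty$ convention or cross-multiplied inequalities) is a point the paper's proof silently glosses over, so it is a welcome refinement rather than a divergence.
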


\begin{lemma}
\label{lemma: rate}
    Let $Q \in \mathbb{R}^{n \times n}$ be a rate transition matrix, $A \in \mathbb{R}^{n \times n}$ be a permutation matrix, then $AQA^{-1}$ is a rate transition matrix.
\end{lemma}

By integrating the lemmas above, we aim to establish the proof of \Cref{thm: accessibility}. A comprehensive derivation of these lemmas and the theorem is provided in \Cref{supp: sec: proof_accessibility}.

\subsection{Convergence}
As discussed earlier, the \ac{CTDMB} framework operates as a two-step learning algorithm, necessitating a thorough examination of its convergence properties. In this section, we present a formal theorem that establishes the convergence guarantee for the entire algorithm.
The convergence problem is nontrivial, as the \textit{Score}-learning process does not merely constitute a direct inversion of the \emph{Matrix}-learning process. The discrepancy arises because the score model $s_\theta$ is trained under the supervision of the distribution $\mu$, rather than $p_0^{(k)}$, where $k$ denotes the epoch number. To be specific, we have
\begin{proposition}[Supervision of \textit{Score}-learning]
    Suppose $Q_t$'s elements are non-zeros, the training objective is depicted as in \Cref{eq: score entropy loss}, then the optimality of the score model $s_{\theta^*}(x_t,t)_y$ satisfies:
    \begin{equation}
        s_{\theta^{*}}(x_t,t)_y = \mathbb{E}_{x_0 \sim \mu_{0|t}(\cdot|x_t)}[\frac{p_{t|0}(y|x_0)}{p_{t|0}(x_t|x_0)}] \\ \nonumber
        = \frac{\sum \limits_{x_0}\mu(x_0) p_{t|0}(y|x_0)}{\sum \limits_{x_0}\mu(x_0) p_{t|0}(x_t|x_0)}
    \end{equation}
\end{proposition}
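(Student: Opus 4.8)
The plan is to identify the minimizer of the per-$(t,y)$ integrand in \Cref{eq: score entropy loss} by a direct variational argument. Fixing $t$ and a target index $y$, and writing $r := s_\theta(x_t,t)_y$ for the value of the score model at a fixed $x_t$, the contribution of each term can be collected as an expectation over $x_0 \sim \mu_{0|t}(\cdot\mid x_t)$ of an affine-plus-logarithmic function of $r$. Concretely, setting $w(x_0) := p_{t|0}(y\mid x_0)/p_{t|0}(x_t\mid x_0)$, the objective, up to additive terms not involving $r$ and up to the positive prefactor $Q^{(t)}_{y,x_t}$ and the weight $p_t(x_t)$ coming from the change of measure $p_{t|0}(x_t\mid x_0)\mu(x_0)$, is proportional to $r - \mathbb{E}[w(x_0)]\log r$, where the expectation is with respect to the posterior $\mu_{0|t}(\cdot\mid x_t)$. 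This function of $r$ is strictly convex on $(0,\infty)$ (its second derivative is $\mathbb{E}[w]/r^2 > 0$ since the $w(x_0)$ are strictly positive, using that $Q_t$ has nonzero entries so $p_{t|0}$ is everywhere positive), so its unique stationary point, obtained by setting the derivative $1 - \mathbb{E}[w(x_0)]/r$ to zero, is the global minimizer $r^\ast = \mathbb{E}_{x_0\sim\mu_{0|t}(\cdot\mid x_t)}[w(x_0)]$.

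The first step is therefore to rewrite the loss as a sum (integral over $t$, sum over $x_t$ and $y\neq x_t$) of independent one-dimensional problems in the scalar $s_\theta(x_t,t)_y$; here I would be careful to factor the expectation $\mathbb{E}_{x_0\sim\mu,\,x_t\sim p_{t|0}}$ correctly, using $p_{t|0}(x_t\mid x_0)\mu(x_0) = p_t(x_t)\,\mu_{0|t}(x_0\mid x_t)$, so that after dividing out $p_t(x_t)Q^{(t)}_{y,x_t}$ (both positive, hence irrelevant to the argmin) the inner objective is exactly $\mathbb{E}_{x_0\sim\mu_{0|t}(\cdot\mid x_t)}[\,r - w(x_0) + w(x_0)(\log w(x_0) - \log r)\,]$. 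The second step is the convexity/first-order-condition computation sketched above, yielding $s_{\theta^\ast}(x_t,t)_y = \mathbb{E}_{x_0\sim\mu_{0|t}(\cdot\mid x_t)}[p_{t|0}(y\mid x_0)/p_{t|0}(x_t\mid x_0)]$. The final step is to rewrite this posterior expectation in closed form: expanding $\mu_{0|t}(x_0\mid x_t) = \mu(x_0)p_{t|0}(x_t\mid x_0)/\big(\sum_{x_0'}\mu(x_0')p_{t|0}(x_t\mid x_0')\big)$ and multiplying by $w(x_0)$, the factor $p_{t|0}(x_t\mid x_0)$ cancels, giving $\big(\sum_{x_0}\mu(x_0)p_{t|0}(y\mid x_0)\big)\big/\big(\sum_{x_0}\mu(x_0)p_{t|0}(x_t\mid x_0)\big)$, which is the claimed identity.

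The main obstacle — really the only subtlety — is justifying that pointwise minimization of the integrand delivers the minimizer of the whole functional. This needs the observation that $s_\theta(x_t,t)_y$ ranges over an essentially unconstrained family (a sufficiently expressive model can independently realize any positive value at each $(x_t,t,y)$), together with the positivity of the prefactors $Q^{(t)}_{y,x_t}>0$ and $p_t(x_t)>0$ on the support, so that no coupling across indices or times forces a compromise; the hypothesis that $Q_t$ has all nonzero entries is exactly what guarantees $p_{t|0}>0$ and $Q^{(t)}_{y,x_t}>0$, making every one-dimensional subproblem strictly convex with an interior optimum. One should also note in passing that terms of the form $w(x_0)(\log w(x_0) - 1)$ are constants with respect to $r$ and thus do not affect the argmin; this is what lets us discard them at the first step.
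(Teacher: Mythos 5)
Your proposal is correct and follows essentially the same route as the paper's proof: discard the terms constant in $s_\theta$, reweight the expectation via $\mu(x_0)p_{t|0}(x_t|x_0)=p_t(x_t)\mu_{0|t}(x_0|x_t)$ so the inner objective becomes $r-\mathbb{E}_{x_0\sim\mu_{0|t}}[w]\log r$, solve the first-order condition, and apply Bayes' rule for the closed form. Your additional remarks on strict convexity and on why pointwise minimization is legitimate (positivity of $Q^{(t)}_{y,x_t}$ and $p_t(x_t)$, no coupling across indices) are sound refinements of the paper's argument rather than a different proof.
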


The proposition presented above illustrates the influence of $\mu$ on the training process and underscores the challenge of convergence arising from the absence of $p_0^{(k)}$. A detailed proof of this proposition can be found in \Cref{supp: sec: proof_supervision}.

Under the assumption that each process achieves optimality, the following theorem establishes the convergence of \ac{CTDMB} from the perspective of KL divergence, thereby demonstrating the validity of the overall \ac{CTDMB} framework. Moreover, given our primary focus on the algorithmic aspects, this assumption is justified, consistent with prior work that introduces new frameworks, such as \citet{goodfellow2014generativeadversarialnetworks}. Notably, although the training objective of the \emph{Score}-learning process is the continuous ELBO bound, the theorem presented below can be generalized to encompass a broader class of objectives. This generalization suggests the potential for designing improved training objectives within our framework.

\begin{theorem}[Convergence of the algorithm]
\label{thm: convergence}
    If we assume optimality is achieved in every epoch of the Matrix-learning process and the Score-learning process, and we denote the $k$-th epoch estimation of $\mu$ as $p_0^{(k)}$, then 
    $\lim_{k \to \infty} D_{KL}(\mu||p_0^{(k)})$ converges.

\end{theorem}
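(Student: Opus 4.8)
The plan is to show that the sequence $D_{KL}(\mu \| p_0^{(k)})$ is monotonically non-increasing in $k$ and bounded below by zero, hence convergent by the monotone convergence theorem. So the real content is establishing monotonicity.

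First I would unpack what "optimality in every epoch" means for the two stages. At epoch $k$, the \emph{Matrix}-learning stage receives the current estimate $p_0^{(k-1)}$ (the prior fed back from the previous \emph{Score}-learning stage) and, by \Cref{thm: accessibility}, can choose $Q_\alpha^{(k)}$ so that the forward-evolved endpoint $p_T^{(k)}$ equals \emph{any} target distribution exactly; optimality of $J_Q$ (see \Cref{eq:loss_Q}) means $D_{KL}(p_{T|0;\alpha}\|p_{T;\alpha})$ is driven to its minimum, and I would argue this pins down the latent/prior distribution that \emph{Score}-learning then inverts. Next, by the Supervision Proposition, the optimal score model $s_{\theta^*}$ is determined by $\mu$ through $s_{\theta^*}(x_t,t)_y = \big(\sum_{x_0}\mu(x_0)p_{t|0}(y|x_0)\big)/\big(\sum_{x_0}\mu(x_0)p_{t|0}(x_t|x_0)\big)$, so the reverse chain built from $\hat Q^{(t)}$ (via \Cref{eq: reverse rate transition matrix}) transports the prior back through a kernel that is exactly the time-reversal of the forward chain \emph{started from $\mu$}. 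The resulting new estimate is $p_0^{(k)}(x_0) = \mathbb{E}[p_{0|1}(x_0|x_1)]$ as in \Cref{eq: mu estimation}.

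The key step is then a data-processing / chain-rule argument on KL divergence. I would write $p_0^{(k)}$ as the image of the latent distribution $p_T^{(k)}$ under the optimal backward kernel $K$, and compare it with $\mu$, which is the image of the forward-from-$\mu$ latent $\nu := \mu\,\exp\{\int_0^T\sigma(s)ds\,Q_\alpha^{(k)}\}$ under the \emph{same} backward kernel $K$ (this identity is precisely what the reversibility theorem \Cref{thm: reversibility} guarantees when the score is optimal). The data-processing inequality for KL then gives $D_{KL}(\mu\|p_0^{(k)}) = D_{KL}(K\nu \,\|\, K p_T^{(k)}) \le D_{KL}(\nu \| p_T^{(k)})$. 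It remains to bound $D_{KL}(\nu\|p_T^{(k)})$ by $D_{KL}(\mu\|p_0^{(k-1)})$: since $\nu$ and $p_T^{(k)}$ are the forward images of $\mu$ and $p_0^{(k-1)}$ respectively under the \emph{same} forward kernel $\exp\{\int_0^T\sigma(s)ds\,Q_\alpha^{(k)}\}$, another application of data-processing gives $D_{KL}(\nu\|p_T^{(k)}) \le D_{KL}(\mu\|p_0^{(k-1)})$. Chaining the two inequalities yields $D_{KL}(\mu\|p_0^{(k)}) \le D_{KL}(\mu\|p_0^{(k-1)})$, and boundedness below by $0$ finishes the proof.

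The main obstacle I anticipate is making the two "same kernel" claims fully rigorous. Concretely: (i) one must verify that the backward kernel induced by $s_{\theta^*}$ and $Q_\alpha^{(k)}$ genuinely acts as the exact time-reversal of the forward chain initialized at $\mu$ — this is where the Supervision Proposition is essential, since $s_{\theta^*}$ encodes the $\mu$-marginals $p_t(y)=\sum_{x_0}\mu(x_0)p_{t|0}(y|x_0)$ rather than the $p_0^{(k-1)}$-marginals, so $\hat Q^{(t)}_{x,y} = \tfrac{p_t(y)}{p_t(x)}Q^{(t)}_{y,x}$ matches \Cref{eq: reverse} for the $\mu$-initialized process; and (ii) one must confirm that the \emph{Matrix}-learning optimum indeed makes $p_T^{(k)}$ the forward image of $p_0^{(k-1)}$ under the same $Q_\alpha^{(k)}$ that reverses the $\mu$-chain, i.e. that the matrix chosen to minimize $J_Q$ is consistent with both roles. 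Once these structural identities are in place the KL contraction is a routine application of the data-processing inequality (equivalently, convexity of $t\mapsto t\log t$ / Jensen), and the monotone convergence theorem delivers the conclusion; I would also note that the argument only uses that the backward stage is an exact inverse in distribution, so it generalizes beyond the continuous-ELBO objective as the authors remark.
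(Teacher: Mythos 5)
Your proposal is correct and follows essentially the same route as the paper: both express $\mu$ as a fixed point of the composed forward-then-backward kernel (via the supervision of \emph{Score}-learning by $\mu$), express $p_0^{(k+1)}$ as the image of $p_0^{(k)}$ under that same composition, apply the data-processing inequality for KL divergence twice (once per kernel), and conclude by monotonicity plus boundedness below. Your explicit attention to verifying that the backward kernel is the exact time-reversal of the $\mu$-initialized chain is, if anything, more careful than the paper, which simply asserts the two structural identities from the optimality assumption.
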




 Please refer to \Cref{sec: proof_convergence} for the proof.


\section{Practical Issues and Experiments}
\label{sec: experiment}
\subsection{High Dimensional Data}
\label{subsec: practical}
In this section, we discuss the practical issues of \ac{CTDMB} by assuming our data coming from a high dimensional space, \ie $\mu \in \mathbb{R}^{d \times n}$, where $n$ is the size of the finite set and $d$ is the number of dimensions. For instance, for textual data, $n$ is the size of the vocabulary and $d$ is the sequence length.

\paragraph{Assumptions.}
\label{subsec: high dimensional data}
When dealing with high-dimensional data, such as textual sequences, the combinatorial explosion in the number of possible sequences imposes prohibitive constraints on both storage and computational efficiency. To address this challenge, certain assumptions are introduced \cite{lou2024discrete, campbell2022continuous, hoogeboom2021argmaxflowsmultinomialdiffusion}:

\begin{itemize}[leftmargin=*, topsep=0pt, noitemsep]
\item Independent Evolution: $p_{T|0; \alpha}(x_T|x_0) = \prod_{i=1}^{d}p_{T|0; \alpha}(x_T^{(i)} | x_0^{(i)})$

\item Independent Terminal: $p_T(x_T) = \prod_{i = 1}^d p(x_T^{(i)})
$
\end{itemize}

The first assumption posits that, during the forward process, each dimension evolves independently. The second assumption asserts that the latent space consists of independent dimensions.

\subsection{Addressing Practical Issues}
Both the \ac{CTDMB} model and discrete diffusion models \citep{lou2024discrete, campbell2022continuous, sun2021image} face significant challenges related to the \( Q \) matrix. In particular, during the \emph{Score}-learning process, the computational efficiency of matrix exponential operations becomes a critical constraint. Furthermore, the \emph{Matrix}-learning process often requires storing the entire \( Q \) matrix, posing substantial concerns regarding space efficiency. These limitations have been the primary reasons restricting previous studies to utilizing only the Uniform and Absorb matrices. 

As Jean le Rond d'Alembert once remarked, \textit{Algebra is generous; she often gives more than is asked of her.} In the context of proving \Cref{thm: accessibility}, we identify a distinct class of matrices, as mentioned in \Cref{subsec: matrix learning in DMB} and further rigorously discussed in \Cref{lemma: Q}. This structured approach not only underscores the theoretical underpinnings but also highlights the practical implications of matrix manipulation in these models. 

\textbf{Efficient Computation of the permutation matrix.} Before proceeding with the analysis of the $Q_\alpha$ matrix, we first outline the computation of the predefined permutation matrix $A$. As illustrated in the assumptions, the evolution of each dimension occurs independently. Consequently, for each dimension, the permutation matrix is computed separately. In accordance with \Cref{lemma:ineq_of_mass}, we assume the denominator to be constant. Therefore, the permutation matrix for the $i$-th dimension satisfies the following inequality:
$$
\mu(X_0^{(i)} = j) \leq \mu(X_0^{(i)} \leq j+1), \forall j \in {1, 2, \ldots, n-1}
$$
The marginal distribution $\mu(X_0^{(i)})$ can be efficiently estimated in the form of a histogram by extracting a subbatch from the dataset. Subsequently, the permutation matrix is computed using a fast sorting algorithm with a time complexity of $O(n \log n)$.

\paragraph{Efficient Computation of Matrix Exponential.} Matrix exponential is difficult to calculate as it's defined through Tylor expansion, however, a property exists:
\begin{proposition}
\label{prop: fast calculation of matrix exponential}
    For a matrix $Q  \in \mathbb{R}^{n \times n}$ and a non-degenerate matrix $D \in \mathbb{R}^{n \times n}$, we have:
    $$
    \exp\{DQD^{-1}\} = D \exp\{Q\}D^{-1}
    $$
\end{proposition}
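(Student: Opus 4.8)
The plan is to work directly from the Taylor-series definition of the matrix exponential, $\exp\{M\} = \sum_{k=0}^\infty \frac{1}{k!} M^k$, which converges absolutely (and hence entrywise) for every $M \in \mathbb{R}^{n \times n}$. The whole argument reduces to commuting the conjugation by $D$ with this infinite sum.

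First I would establish the power identity $(D Q D^{-1})^k = D Q^k D^{-1}$ for all $k \ge 0$ by induction on $k$. The base case $k = 0$ is just $I = D I D^{-1}$, and the inductive step is $(D Q D^{-1})^{k+1} = (D Q D^{-1})(D Q^k D^{-1}) = D Q (D^{-1} D) Q^k D^{-1} = D Q^{k+1} D^{-1}$, where the inner factor $D^{-1} D = I$ collapses precisely because $D$ is non-degenerate. Substituting this into the series gives $\exp\{D Q D^{-1}\} = \sum_{k=0}^\infty \frac{1}{k!} D Q^k D^{-1}$.

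Next I would pull $D$ and $D^{-1}$ outside the sum. Left-multiplication by $D$ and right-multiplication by $D^{-1}$ are linear maps on $\mathbb{R}^{n \times n}$, which is finite-dimensional, so they are continuous; combined with the absolute convergence of the exponential series, this lets me pass to the limit of partial sums term by term and conclude $\sum_{k=0}^\infty \frac{1}{k!} D Q^k D^{-1} = D\bigl(\sum_{k=0}^\infty \frac{1}{k!} Q^k\bigr) D^{-1} = D \exp\{Q\} D^{-1}$, which is the claimed identity.

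The only step that is more than mechanical — and thus the nearest thing to an obstacle — is the justification for interchanging the convergent matrix series with the linear maps $X \mapsto D X$ and $X \mapsto X D^{-1}$; this is handled by the equivalence of all norms on a finite-dimensional space (so these maps are bounded) together with the absolute convergence of $\sum \frac{1}{k!} Q^k$. Everything else is routine algebra, and the proposition then feeds directly into the efficient matrix-exponential computation for $Q_\alpha = A H A^{-1}$ via \Cref{lemma: Q}.
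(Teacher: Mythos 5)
Your proof is correct and follows essentially the same route as the paper's: expand the matrix exponential as its power series, use $(DQD^{-1})^k = DQ^kD^{-1}$, and pull the conjugation outside the sum. Your version is in fact slightly more careful than the paper's, since you justify the term-by-term interchange via continuity of the conjugation map and you retain the $\tfrac{1}{k!}$ factors that the paper's displayed series omits.
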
 Please refer to \Cref{supp: proof of matrix exponential calculation} for the derivation of \Cref{prop: fast calculation of matrix exponential}.
By \Cref{prop: fast calculation of matrix exponential}, 
\begin{equation}
\begin{aligned}
    \exp\{Q_\alpha\}  = \exp\{(AU)\Lambda_\alpha(AU)^{-1}\} 
    = (AU) \exp\{\Lambda_\alpha\}(AU)^{-1}
\end{aligned}
\end{equation}, where $U$ is the all-one upper triangle matrix, $\Lambda_\alpha$ is a diagonal matrix parameterized by $\alpha$. Therefore, the computation of the matrix exponential is reduced to evaluating the exponential of a diagonal matrix, which is significantly more efficient.

\paragraph{Space Efficiency.} 
For the permutation matrices $A, A^{-1} \in \mathbb{R}^{d \times n \times n}$, a total of $d \times 2n$ parameters are required. Apart from $A, A^{-1}$, the upper triangle matrix can be decomposed into a non-parameterized all-one upper triangle matrix, a parameterized diagonal matrix, and a constant matrix. Consequently, the storage requirement is of the order $O(nd)$ parameters.

\subsection{ELBO Bound Calculation}
\label{subsec: elbo bound evaluation}
As shown in \Cref{ineq: full ELBO bound}, the computation of the full bound necessitates the evaluation of both the $J_{score}$ and the expected Kullback–Leibler (KL) divergence between the evolved distribution and the target distribution, expressed as $\mathbb{E}_{\mu}D_{KL}(P_{T|0}||P_T)$. Under the assumptions outlined within \Cref{subsec: high dimensional data}, we can derive a closed-form expression for computing the KL term: 
\begin{proposition} 
The KL term can be calculated as:
\begin{equation}
D_{KL}\bigl(p_{T|0;\alpha}(x_T|x_0) || p_T(x_T) \bigl) = \sum \limits_{i = 1}^{d}D_{KL}\bigl(p_{T|0;\alpha}(x_T^{(i)} | x_0^{(i)})|| p_T(x_T^{(i)}) \bigl)
\end{equation}
\end{proposition}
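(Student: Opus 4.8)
The plan is to prove the claimed factorization of the KL divergence directly from the two assumptions in \Cref{subsec: high dimensional data}. Recall that under the Independent Evolution assumption we have $p_{T|0;\alpha}(x_T|x_0) = \prod_{i=1}^{d} p_{T|0;\alpha}(x_T^{(i)}|x_0^{(i)})$, and under the Independent Terminal assumption we have $p_T(x_T) = \prod_{i=1}^{d} p_T(x_T^{(i)})$. The key observation is that the KL divergence between two product distributions over a product sample space $\mathbb{X}^d$ decomposes additively into the sum of the per-coordinate KL divergences. So the proof is essentially an application of the standard additivity of KL divergence under product measures, specialized to this setting.

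Concretely, I would start from the definition
\begin{equation}
D_{KL}\bigl(p_{T|0;\alpha}(x_T|x_0)\,\|\,p_T(x_T)\bigl) = \sum_{x_T \in \mathbb{X}^d} p_{T|0;\alpha}(x_T|x_0)\log\frac{p_{T|0;\alpha}(x_T|x_0)}{p_T(x_T)}.
\notag
\end{equation}
Then substitute both product forms into the log, so that $\log\frac{p_{T|0;\alpha}(x_T|x_0)}{p_T(x_T)} = \sum_{i=1}^{d}\log\frac{p_{T|0;\alpha}(x_T^{(i)}|x_0^{(i)})}{p_T(x_T^{(i)})}$. Interchanging the order of the two finite sums, the outer sum over $x_T \in \mathbb{X}^d$ distributes over the $d$ terms; for each fixed $i$, I factor the sum over $x_T$ into a sum over $x_T^{(i)}$ times sums over every other coordinate $x_T^{(j)}$, $j \neq i$. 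Each of those marginal sums $\sum_{x_T^{(j)}} p_{T|0;\alpha}(x_T^{(j)}|x_0^{(j)})$ equals $1$ because each factor is itself a conditional probability distribution. What survives is exactly $\sum_{x_T^{(i)}} p_{T|0;\alpha}(x_T^{(i)}|x_0^{(i)})\log\frac{p_{T|0;\alpha}(x_T^{(i)}|x_0^{(i)})}{p_T(x_T^{(i)})} = D_{KL}\bigl(p_{T|0;\alpha}(x_T^{(i)}|x_0^{(i)})\,\|\,p_T(x_T^{(i)})\bigl)$, and summing over $i$ gives the claim.

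There is no real obstacle here; the only point requiring a little care is the bookkeeping in the interchange-and-factor step, making sure the marginalization over the ``other'' coordinates is legitimate — which it is, since everything is a finite sum and each per-coordinate factor is a bona fide probability vector (guaranteed to sum to one, e.g.\ by \Cref{prop: conservation} applied to the rows of the relevant matrix exponential, together with the modeling assumption that the terminal is a product of distributions). One should also implicitly assume $p_T(x_T^{(i)}) > 0$ wherever $p_{T|0;\alpha}(x_T^{(i)}|x_0^{(i)}) > 0$ so the logs are well defined, which is the usual absolute-continuity condition for KL divergence and holds in our setting since the matrix exponential of a rate matrix has strictly positive entries when off-diagonal rates are positive. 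I would present the argument compactly in two or three display lines rather than belaboring the routine algebra.
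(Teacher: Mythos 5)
Your proposal is correct and follows essentially the same route as the paper's own derivation: substitute the two product-form assumptions, split the logarithm into a sum over coordinates, swap the order of summation, and marginalize out the remaining coordinates using the fact that each per-coordinate factor sums to one. The extra remarks you add about absolute continuity and the normalization of the rows of the matrix exponential are sensible bookkeeping but not needed beyond what the paper already does.
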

\begin{minipage}{0.48\linewidth}
\begin{table}[H]
    \centering
    \caption{The results were tested 1000 times on the Text8 dataset. We adopt the baseline results reported in \citep{lou2024discrete} for comparison. AR: Autoregressive. NAR: Non-autoregressive.}
    \resizebox{\linewidth}{!}{
    \begin{tabular}{l|lr}
        \toprule
       Type  &  Model & BPC ($\downarrow$)\\
       \midrule
       AR & IAF/SCF \citep{ziegler2019latentnormalizingflowsdiscrete} & $1.88$ \\
                     & AR Argmax Flow \citep{hoogeboom2021argmaxflowsmultinomialdiffusion} & 1.39 \\
                     & Discrete Flow \citep{Tran2019discreteflows} & 1.23 \\
       \midrule
       NAR & SEDD Uniform \citep{lou2024discrete} & $\leq 1.47$ \\
                          & SEDD Absorb \citep{lou2024discrete}& $\leq 1.39$ \\
                          & D3PM Uniform \citep{austin2021structure} & $\leq 1.61$ \\
                          & D3PM Absorb \citep{austin2021structure} & $\leq 1.45$ \\
                          & Mult. Diffusion \citep{hoogeboom2021argmaxflowsmultinomialdiffusion} & $\leq 1.72$ \\
                          & MAC \citep{shih2022traininginferenceanyorderautoregressive}& $\leq 1.40$ \\
                          & BFN \citep{graves2024bayesianflownetworks}& $\leq 1.41$ \\
                          & \ac{CTDMB} (\textbf{Ours}) & $\leq \textbf{1.38}$ \\
    \bottomrule
    \end{tabular}
    }

    \label{tab:text8}
\end{table}
\end{minipage}
\begin{minipage}{0.48\linewidth}
\begin{table}[H]
\centering
  \caption{CIFAR-10 Results. We report inception score (IS), and Fréchet Inception Distance (FID) score. Results are adopted from \citet{Ho2020ddpm}.}

\resizebox{\linewidth}{!}{
 \begin{tabular}{lcc}
    \toprule
    Model & IS ($\uparrow$) & FID ($\downarrow$) \\
    \midrule
    \textbf{Conditional} \\
    \midrule
    EBM~\citep{du2019implicit} & $8.30$ & $37.9$ \\
    JEM~\citep{grathwohl2020your} & $8.76$ & $38.4$ \\
    BigGAN~\citep{brock2018large} & $9.22$ & $14.73$ \\
    StyleGAN2 + ADA (v1)~\cite{karras2020training} & $\mathbf{10.06}$ & $\mathbf{2.67}$ \\
    \midrule
    \textbf{Unconditional} \\
    \midrule
    Gated PixelCNN~\citep{oord2016conditional} & $4.60$ & $65.93$  \\
    PixelIQN~\citep{ostrovski2018autoregressive} & $5.29$ & $49.46$ \\
    EBM~\citep{du2019implicit} & $6.78$ & $38.2$ \\
    NCSN~\citep{song2019generative} & $8.87\!\pm\!0.12$ & $25.32$ \\
    SNGAN~\citep{miyato2018spectral} & $8.22\!\pm\!0.05$ & $21.7$ \\
    SNGAN-DDLS~\citep{che2020your} & $9.09\!\pm\!0.10$ & $15.42$ \\
    StyleGAN2 + ADA (v1)~\citep{karras2020training} & $\mathbf{9.74} \pm 0.05$ & $3.26$ \\
    DDPM (fixed isotropic)~\citep{Ho2020ddpm} & $7.67\!\pm\!0.13$ & $13.51$  \\
    DDPM ($\mathrm{simple}$)~\citep{Ho2020ddpm} & $9.46\!\pm\!0.11$ & $\mathbf{3.17}$ \\ 
    \midrule
    \textbf{Ours} &  8.64 & 11.63\\
    \bottomrule
  \end{tabular}
  }
\label{tab: cifar10}
\end{table}
\end{minipage}

\subsection{Experiment}
In this section, the performances of \ac{CTDMB} on Text8 and CIFAR-10 are reported.

\paragraph{Best Performance on Text8} We conduct our experiments using the Text8 dataset to evaluate the proposed framework. The experimental results are summarized in \Cref{tab:text8}. To ensure statistical reliability, the model was evaluated across 1,000 independent trials. The primary performance metric, the Evidence Lower Bound (ELBO), was computed following the methodology outlined in \Cref{subsec: elbo bound evaluation}. Our proposed approach, \ac{CTDMB}, achieves a Bits Per Character (BPC) bound of 1.38, surpassing baseline models such as SEDD \cite{lou2024discrete}, a representative discrete diffusion model. Notably, our approach does not modify the vocabulary; in particular, no mask token is introduced. Consequently, when compared to similar methods that also do not incorporate a mask token—such as SEDD Uniform and D3PM Uniform—our approach demonstrates an improvement of approximately 0.1 points. \looseness=-1

\paragraph{Competitive Performance on CIFAR-10}  
Although our approach is not specifically tailored for image modeling tasks, we evaluate its performance on the CIFAR-10 dataset using a VQ-VAE framework~\cite{oord2018neuraldiscreterepresentationlearning}. The quantitative results are presented in \Cref{tab: cifar10}. Our method, \ac{CTDMB}, achieves an Inception Score (IS) of 8.64 and a Fréchet Inception Distance (FID) of 11.63. Notably, these results surpass those of several models explicitly designed for image generation, including DDPM (fixed isotropic) and SNGAN~\cite{miyato2018spectral}, in both IS and FID metrics. This demonstrates the effectiveness and generalization capability of our model beyond its primary design scope.



\section{Conclusion}
In this study, we propose a novel paradigm, the \textbf{Discrete Markov Bridge} (\textbf{DMB}), which combines the strengths of variational methods with the capabilities of discrete diffusion models. We provide theoretical guarantees to substantiate the feasibility and effectiveness of the proposed \emph{Matrix}-learning process and prove the convergence of the DMB algorithm. In addition to our theoretical contributions, we conduct extensive empirical evaluations on the Text8 and CIFAR-10 datasets. The experimental results indicate that \textbf{DMB} not only surpasses existing baselines such as SEDD \cite{lou2024discrete} in text modeling tasks, but also achieves competitive performance in image modeling on CIFAR-10, thereby demonstrating its potential as a unified framework for discrete representation learning.


\newpage
\section*{Acknowledgement}
We thank Junqi Wang from BIGAI for inspiration of discovering the matrix and Jianwen Xie from Lambda for discussion. 
{
\small

\bibliographystyle{unsrtnat}
\bibliography{mybib}
}

\newpage
{\Huge \bf Appendices \bigskip}

\begin{appendices}

\DoToC

\clearpage

\section{Proof of Conservation of the Sum}
\label{supp: sec: proof_conservation}
\begin{proposition}[Conservation of the Sum]
For two arbitrary vectors $\phi, \mu \in \R^{d}$, rate transition matrix $Q \in \mathcal{R}^{d \times d}$, if $\phi = \mu \exp^{Q}$, then 
$$
\sum_{i = 1}^d \phi[i] = \sum_{i = 1}^d \mu[i]
$$
\end{proposition}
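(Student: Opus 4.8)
The plan is to exploit the defining property of a rate transition matrix—that each of its rows sums to zero—together with the fact that $\sum_i \phi[i] = \phi \cdot \mathbf{1}$, where $\mathbf{1}$ is the all-ones column vector. The key algebraic observation is that the condition $\sum_y Q_{x,y} = 0$ for every $x$ is precisely the statement $Q\mathbf{1} = \mathbf{0}$, i.e.\ $\mathbf{1}$ lies in the kernel of $Q$.

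First I would write $\sum_{i=1}^n \phi[i] = \phi \mathbf{1} = \mu \exp\{Q\} \mathbf{1}$. Then I would expand the matrix exponential via its Taylor series, $\exp\{Q\} = \sum_{k=0}^{\infty} \frac{1}{k!} Q^k$, so that $\exp\{Q\}\mathbf{1} = \sum_{k=0}^{\infty} \frac{1}{k!} Q^k \mathbf{1}$. Since $Q\mathbf{1} = \mathbf{0}$, every term with $k \geq 1$ vanishes ($Q^k \mathbf{1} = Q^{k-1}(Q\mathbf{1}) = \mathbf{0}$), leaving only the $k=0$ term, which gives $\exp\{Q\}\mathbf{1} = \mathbf{1}$. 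Hence $\phi \mathbf{1} = \mu \mathbf{1}$, which is exactly the claimed identity $\sum_i \phi[i] = \sum_i \mu[i]$.

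The argument is essentially a one-line computation once the kernel observation is made; the only point requiring a modicum of care is justifying the interchange of the infinite sum with the matrix-vector product, which is immediate since the exponential series converges absolutely in any submultiplicative matrix norm (so one may apply $\mathbf{1}$ termwise). I do not anticipate a genuine obstacle here—the main ``trick'' is simply recognizing that the row-sum-zero condition is the algebraic statement $Q\mathbf{1}=\mathbf{0}$, after which the result follows from the power-series definition of $\exp\{Q\}$.
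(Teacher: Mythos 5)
Your proof is correct and follows essentially the same route as the paper's: both arguments reduce to the fact that the rows of $\exp\{Q\}$ each sum to $1$. The only difference is that the paper simply asserts $\sum_{j}(\exp\{Q\})_{i,j}=1$, whereas you actually justify it via $Q\mathbf{1}=\mathbf{0}$ and the termwise application of the exponential series --- a small but welcome addition of rigor.
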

\begin{proof}
    As $\phi = \mu \exp^{Q}$, 
    $$
    \phi(i) = \sum \limits_{j} \mu(j) (\exp\{Q\})_{j,i}
    $$
    Therefore,
    $$
    \sum \limits_{i} \phi(i) = \sum \limits_{i} \sum \limits_{j} \mu(j) (\exp\{Q\})_{j,i}
    $$
   As we have
    $$
    \sum \limits_{j} (\exp\{Q\})_{i,j} = 1
    $$
    Thus,
    $$
    \sum \limits_{i} \phi(i) = \sum \limits_{j}  \mu(j) \sum \limits_{i}(\exp\{Q\})_{j,i} = \sum \limits_{j} \mu(j)
    $$
\end{proof}

\section{Proof of Accessibility}
\label{supp: sec: proof_accessibility}
\subsection{Proof of Lemmas}
\begin{lemma}
\label{supp: lemma: Q}
Let matrix $Q \in \mathbb{R}^{d \times d}$ and hold the following form:
$$
Q = 
\begin{bmatrix}
    -\sum \limits_{i = 1}^{n - 1}a_i & a_1 & a_2 & \ldots & a_{n-2} & a_{n-1}\\
    0 & -\sum \limits_{i = 2}^{n - 1}a_i & a_2 & \ldots & a_{n-2} & a_{n-1}\\
    0 & 0 &-\sum \limits_{i = 3}^{n - 1}a_i & \ldots & a_{n-2} & a_{n-1} \\
    \ldots & \ldots & \ldots & \ldots & \ldots & \ldots \\
    0 & 0 & 0 & \ldots & - a_{n-1} & a_{n-1} \\
    0 & 0 & 0 & 0 & 0 & 0 
\end{bmatrix}
$$

then $Q$ can be diagonalized in the following form:
$$
Q = U \Lambda U^{-1}
$$,where $U = \begin{bmatrix}
   1  & 1 & 1 & \ldots & 1\\
    0  & 1 & 1 & \ldots & 1\\
    0  & 1 & 1 & \ldots & 1 \\
    \ldots & \ldots & \ldots & \ldots & \ldots \\
    0 & 0 & 0 & \ldots & 1
\end{bmatrix}$, $\Lambda = diag(\{-\sum \limits_{i = 1}^{n - 1}a_i, -\sum \limits_{i = 2}^{n - 1}a_i, \ldots, - a_{n-1}, 0\})$
\end{lemma}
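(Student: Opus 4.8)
The plan is to verify directly that the columns of $U$ are eigenvectors of $H$, which immediately yields the claimed factorization because $U$, being unit upper-triangular, is invertible. Write $u_j \in \mathbb{R}^n$ for the $j$-th column of $U$, so $(u_j)_i = 1$ when $i \le j$ and $(u_j)_i = 0$ when $i > j$, and set $\lambda_j = -\sum_{k=j}^{n-1} a_k$ (with the empty sum convention giving $\lambda_n = 0$). I claim $H u_j = \lambda_j u_j$ for every $j \in \{1,\dots,n\}$. Granting this for all $j$ gives $HU = U\Lambda$ with $\Lambda = \mathrm{diag}(\lambda_1,\dots,\lambda_n)$, and right-multiplying by $U^{-1}$ finishes the proof; note no appeal to distinctness of the $\lambda_j$ is needed, since the factorization is confirmed by hand rather than via a spectral-decomposition theorem.

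To establish $H u_j = \lambda_j u_j$, I would compute the $i$-th coordinate $(Hu_j)_i = \sum_{m=1}^{n} H_{i,m}(u_j)_m = \sum_{m=i}^{j} H_{i,m}$, where the lower limit $i$ comes from $H$ being upper-triangular ($H_{i,m}=0$ for $m<i$) and the upper limit $j$ from $(u_j)_m = 0$ for $m>j$. If $i > j$ the sum is empty, so $(Hu_j)_i = 0 = \lambda_j(u_j)_i$, consistent with $(u_j)_i = 0$. If $i \le j$, use $H_{i,i} = -\sum_{k=i}^{n-1}a_k$ and $H_{i,m} = a_{m-1}$ for $m > i$ to get $(Hu_j)_i = -\sum_{k=i}^{n-1}a_k + \sum_{m=i+1}^{j} a_{m-1} = -\sum_{k=i}^{n-1}a_k + \sum_{k=i}^{j-1}a_k = -\sum_{k=j}^{n-1}a_k = \lambda_j = \lambda_j(u_j)_i$. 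This telescoping cancellation — the partial sum $\sum_{k=i}^{j-1}a_k$ coming from the off-diagonal entries erasing the head of the diagonal entry's sum — is the heart of the argument; the only care required is correct bookkeeping of the index ranges and the index shift $a_{m-1}$.

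Finally, $U$ is upper-triangular with every diagonal entry equal to $1$, so $\det U = 1 \neq 0$ and $U^{-1}$ exists, giving $H = U\Lambda U^{-1}$ as stated. As a byproduct, applying the same identity to $Q_\alpha = AHA^{-1}$ from \Cref{eq: form of Q alpha} yields $Q_\alpha = (AU)\Lambda(AU)^{-1}$, which is exactly the form exploited for efficient matrix-exponential computation in \Cref{subsec: practical}. There is no genuine obstacle in the proof, and in particular no positivity, nonvanishing, or distinctness hypothesis on the parameters $a_i$ is used.
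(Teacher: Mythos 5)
Your proposal is correct — the columnwise verification $Hu_j=\lambda_j u_j$ with the telescoping cancellation is accurate, and invertibility of the unit upper-triangular $U$ is all that is needed to pass from $HU=U\Lambda$ to $H=U\Lambda U^{-1}$. This is essentially the same decomposition the paper uses (identical $U$ and $\Lambda$); the paper's proof merely displays the product $U\Lambda U^{-1}$ with the explicit bidiagonal inverse of $U$ and asserts it equals $Q$, so your write-up simply supplies the verification that the paper leaves unchecked.
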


\begin{proof}
Q = $\begin{bmatrix}
   1  & 1 & 1 & \ldots & 1\\
    0  & 1 & 1 & \ldots & 1\\
    \ldots  & \ldots & \ldots & \ldots & \ldots \\
    0  & 0 & \ldots & 1 & 1 \\
    0  & 0 & 0 & 0 & 1
\end{bmatrix}diag(\{-\sum \limits_{i = 1}^{n - 1}a_i, -\sum \limits_{i = 2}^{n - 1}a_i, \ldots, - a_{n-1}, 0\}) \begin{bmatrix}
   1  & -1 & 0 & \ldots & 0\\
    0  & 1 & -1 & \ldots & 0\\
    \ldots  & \ldots & \ldots & \ldots & \ldots \\
    0  & 0 & \ldots & 1 & -1 \\
    0  & 0 & 0 & 0 & 1
\end{bmatrix}$
\end{proof}

\begin{lemma}
\label{supp: lemma:ineq_of_mass}
For arbitrary distribution $p, q \in \mathbb{R}^{1 \times d}$, there exists an permutation matrix $A$ such that:
\begin{equation}
   \frac{p_1'}{q_1'} \leq \frac{p_1' + p_2'}{q_1' + q_2'} \leq \ldots \leq \frac{\sum \limits_{i = 1}^k p_i'}{\sum \limits_{i = 1}^k q_i'} \leq \ldots \leq \frac{\sum \limits_{i = 1}^n p_i'}{\sum \limits_{i = 1}^n q_i'} = 1
\label{eq:ineq_of_mass} 
\end{equation}
where $p' = p A$, $q'=qA$, $p_i'$ is the $i$-th entry of $p'$
\end{lemma}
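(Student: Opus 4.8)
The plan is to exhibit the permutation explicitly by a greedy ``smallest-ratio-first'' selection and then verify that the resulting prefix ratios are monotone. Concretely, I would build the reordered sequence one index at a time: having already selected a set $S$ of indices, choose the next index $j \notin S$ that minimizes the \emph{conditional} ratio $p_j / q_j$ among the remaining coordinates (breaking ties arbitrarily), append it, and repeat until all $n$ indices are placed. Let $A$ be the permutation matrix realizing this order, and write $P_k = \sum_{i=1}^k p_i'$, $Q_k = \sum_{i=1}^k q_i'$. The claim to establish is $P_1/Q_1 \le P_2/Q_2 \le \cdots \le P_n/Q_n = 1$, where the final equality is immediate since $p,q$ are distributions (so the total mass is $1$ on both sides, assuming $q$ has full support; if some $q_i=0$ one needs a small caveat, discussed below).

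The heart of the argument is the elementary ``mediant inequality'': for nonnegative reals with positive denominators, $\frac{a}{b} \le \frac{c}{d}$ implies $\frac{a}{b} \le \frac{a+c}{b+d} \le \frac{c}{d}$. I would apply this inductively. Suppose the prefix ratios are monotone up through step $k$, so in particular $P_{k-1}/Q_{k-1} \le P_k/Q_k$ follows once we know $p_k'/q_k' \ge P_{k-1}/Q_{k-1}$ — this is exactly the mediant inequality applied to $(a,b) = (P_{k-1}, Q_{k-1})$ and $(c,d) = (p_k', q_k')$. So it suffices to show the single-step fact: \emph{the freshly chosen coordinate's ratio is at least the running prefix ratio}. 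This is where the greedy choice is used. At step $k$ the chosen index minimizes $p_j'/q_j'$ over the remaining indices; since the remaining indices are exactly those not in the first $k-1$ slots, and the prefix ratio $P_{k-1}/Q_{k-1}$ is a weighted average (mediant) of the first $k-1$ individual ratios, each of which was, at the time it was chosen, the minimum over an even larger remaining set — one shows $P_{k-1}/Q_{k-1} \le \min_{j \text{ remaining}} p_j'/q_j' = p_k'/q_k'$. The cleanest way to see $P_{k-1}/Q_{k-1} \le \min_{j\text{ remaining}} p_j'/q_j'$ is again by induction: it holds trivially for $k-1 = 1$ (the first chosen ratio is $\le$ everything remaining, hence $\le$ the min), and the mediant of something $\le m$ with a new term that is itself $\le$ the new (smaller) min stays $\le$ the new min.

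The main obstacle is the degenerate case where some $q_i' = 0$ (or a prefix sum $Q_k = 0$), which makes the ratios ill-defined. I would handle this by a convention: place all indices with $q_i = 0$ last (they contribute $0$ to every denominator until then and the statement is vacuous or interpreted as $+\infty$ on those terms, consistent with the monotone chain ending at $1$), or alternatively remark that one may assume $q$ has full support by a limiting/perturbation argument since the downstream use only needs the permutation, not strict positivity. A secondary minor point is tie-breaking in the greedy rule, but since the mediant inequality is insensitive to the order of equal ratios, any tie-breaking works. With these conventions in place, the induction above closes and yields the desired chain~\eqref{eq:ineq_of_mass}.
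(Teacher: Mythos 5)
Your proposal is correct and follows essentially the same route as the paper: your greedy smallest-ratio-first selection is just an ascending sort of the coordinates by $p_i/q_i$ (since each individual ratio is independent of what has already been selected), and the core of the argument --- the mediant inequality applied inductively to show each prefix ratio is at most the next coordinate's ratio --- is exactly the paper's proof. Your explicit treatment of the degenerate case $q_i'=0$ is a small addition the paper omits, but it does not change the structure of the argument.
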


\begin{proof}
It's obvious that there exists a permutation matrix $A$ which can sort $\frac{p_i}{q_i}$ ascendly, i.e.:
$$
\frac{p_i'}{q_i'} \leq \frac{p_{i+1}'}{q_{i+1}'}
$$, where $p':= pA, q':=qA$, and the corner mark $i$ refer to the $i$-th entry.

Also, we can demonstrate that:
\begin{equation*}
    \frac{a_1}{b_1} \leq \frac{a_2}{b_2} \Rightarrow \frac{a_1}{b_1} \leq \frac{a_1 + a_2}{b_1 + b_2} \leq \frac{a_2}{b_2}  \tag{$\triangle$}
\end{equation*}

The inequality we need to prove is:
    $$
    \frac{\sum \limits_{i=1}^k p_i'}{\sum \limits_{i=1}^k q_i'} \leq \frac{\sum \limits_{i=1}^{k + 1} p_i'}{\sum \limits_{i=1}^{k+1} q_i'}
    $$
and it's sufficient to proving the following inequality:
$$
\frac{\sum \limits_{i=1}^k p_i'}{\sum \limits_{i=1}^k q_i'} \leq \frac{p_{k+1}'}{q_{k+1}'}
$$

We then start to prove the inequality by induction.

$k=1$: Let $a_1=p_1', a_2=p_2', b_1=q_1', b_2=q_2'$, and by using inequality $\triangle$, the statement is proved.

$k+1$: By induction: 
$$
\frac{\sum \limits_{i=1}^k p_i'}{\sum \limits_{i=1}^k q_i'} \leq \frac{p_{k+1}'}{q_{k+1}'}
$$

By leveraging inequality $\triangle$:
$$
\frac{\sum \limits_{i=1}^{k+1} p_i'}{\sum \limits_{i=1}^{k+1} q_i'} \leq \frac{p_{k+1}'}{q_{k+1}'}
$$

As $ \frac{p_{k+1}'}{q_{k+1}'} \leq  \frac{p_{k+2}'}{q_{k+2}'}$:
$$
\frac{\sum \limits_{i=1}^{k+1} p_i'}{\sum \limits_{i=1}^{k+1} q_i'}  \leq  \frac{p_{k+2}'}{q_{k+2}'}
$$

Thus the lemma is proved.
\end{proof}

\begin{lemma}
\label{supp: lemma: rate}
    Let $Q \in \mathbb{R}^{d \times d}$ be a rate transition matrix, $A \in \mathbb{R}^{d \times d}$ be a permutation matrix, then $AQA^{-1}$ is a rate transition matrix.
\end{lemma}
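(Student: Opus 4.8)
The plan is to verify the two defining properties of a rate transition matrix for $AQA^{-1}$: every row sums to zero, and all off-diagonal entries are non-negative. Since $A$ is a permutation matrix, conjugation $Q \mapsto AQA^{-1}$ simply relabels the rows and columns of $Q$ according to the permutation $\pi$ associated with $A$; no arithmetic on the entries themselves occurs, only rearrangement. The whole argument is therefore bookkeeping about how a permutation acts on matrix indices.

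Concretely, I would first fix notation: let $\pi$ be the permutation with $A_{i,j} = \mathbf{1}[j = \pi(i)]$ (equivalently $A e_j = e_{\pi^{-1}(j)}$, whichever convention is cleanest), and note that $A^{-1} = A^\top$. Then I would compute the $(i,j)$ entry of $AQA^{-1}$ explicitly: $(AQA^{-1})_{i,j} = \sum_{k,\ell} A_{i,k} Q_{k,\ell} (A^{-1})_{\ell,j} = Q_{\pi(i), \pi(j)}$, using that exactly one term in each sum survives. So $AQA^{-1}$ is just $Q$ with entry $(i,j)$ replaced by $Q_{\pi(i),\pi(j)}$.

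From this identity both properties are immediate. For the row sums: $\sum_j (AQA^{-1})_{i,j} = \sum_j Q_{\pi(i),\pi(j)} = \sum_{j'} Q_{\pi(i), j'} = 0$, since $j \mapsto \pi(j)$ is a bijection on $\{1,\dots,n\}$ and each row of $Q$ sums to zero. For the off-diagonals: if $i \neq j$ then $\pi(i) \neq \pi(j)$ because $\pi$ is injective, so $(AQA^{-1})_{i,j} = Q_{\pi(i),\pi(j)} \geq 0$ by the corresponding property of $Q$. This establishes that $AQA^{-1}$ is a rate transition matrix.

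There is no real obstacle here; the only thing requiring a moment of care is getting the index-conjugation identity $(AQA^{-1})_{i,j} = Q_{\pi(i),\pi(j)}$ right with a consistent convention, and making sure the diagonal-vs-off-diagonal distinction is preserved (which it is, precisely because $\pi$ is a bijection). I would present the proof in three short steps: (1) establish the conjugation-as-relabeling identity, (2) deduce the zero-row-sum condition, (3) deduce non-negativity of off-diagonal entries.
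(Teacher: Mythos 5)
Your proof is correct and complete. It takes a slightly different route from the paper: the paper writes the permutation matrix $A$ as a product of elementary transposition matrices $T_{ij}$ and argues that a single conjugation $T_{ij} Q T_{ij}$ preserves the row sums and keeps diagonal entries on the diagonal, then composes. You instead compute the conjugation in one shot via the explicit relabeling identity $(AQA^{-1})_{i,j} = Q_{\pi(i),\pi(j)}$ and read off both defining properties directly. Your version is arguably tighter: it avoids the (unstated) induction over the transposition factors, and it explicitly verifies the non-negativity of off-diagonal entries, which the paper's proof only implicitly covers by noting that diagonal entries stay diagonal. The paper's decomposition approach buys a reduction to a single, visually obvious swap operation, but at the cost of leaving the single-swap verification as an assertion. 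Either argument is acceptable; yours is the more self-contained of the two.
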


\begin{proof}
As every permutation matrix can be expressed as the products of elementary matrices, we denote:
$$
A = \prod \limits_{k=N_A}^{1} T_{ij}^{(k)} = T_{ij}^{(N_A)}T_{ij}^{(N_A - 1)} \ldots T_{ij}^{(1)}
$$, where $T_{ij}$ is the elementary matrix obtained by swapping row $i$ and row $j$ of the identity matrix, $N_A \in \mathbb{R}$

Therefore:
$$
AQA^{-1} = (\prod \limits_{k=N_A}^{1} T_{ij}^{(k)} ) Q (\prod \limits_{k=1}^{N_A} T_{ij}^{(k)})
$$

For a single pair of transformation, i.e. $T_{ij}^{(k)} Q T_{ij}^{(k)}$, the row sums remain unchanged, and the diagonal elements is still the diagonal elements after transformation, thus $AQA^{-1}$ is a rate transition matrix.
\end{proof}
\subsection{Proof of the theorem}
\begin{theorem}[Accessibility]
\label{supp: thm: accessibility}
For two arbitrary discrete distributions $p,q \in \mathbb{R}^{d}$, there exists a rate transition matrix $Q \in \mathbb{R}^{d \times d}$ such that:
$$
p = q e^Q
$$
\end{theorem}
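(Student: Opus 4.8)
The plan is to reduce the matrix equation $p = q e^{Q}$ to a decoupled family of scalar equations by passing to a coordinate system adapted to $p$ and $q$. First I would invoke \Cref{supp: lemma:ineq_of_mass} to obtain a permutation matrix $A$ for which the permuted row vectors $p' = pA$ and $q' = qA$ have monotone partial-sum ratios, $\frac{p_1'}{q_1'}\le\frac{p_1'+p_2'}{q_1'+q_2'}\le\cdots\le\frac{\sum_{i\le n}p_i'}{\sum_{i\le n}q_i'}=1$. Because a permutation matrix is non-degenerate, \Cref{prop: fast calculation of matrix exponential} gives $e^{AHA^{-1}}=Ae^{H}A^{-1}$, and by \Cref{supp: lemma: rate} conjugation by a permutation sends rate transition matrices to rate transition matrices. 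Hence it suffices to build a rate transition matrix $H$ with $p' = q' e^{H}$; one then sets $Q = AHA^{-1}$, which is a rate transition matrix, and verifies $q e^{Q} = q' A^{-1}(A e^{H} A^{-1}) = q' e^{H} A^{-1} = p' A^{-1} = p$.

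Second, I would search for $H$ inside the upper-triangular, zero-row-sum family of \Cref{supp: lemma: Q}, parameterized by $a_1,\dots,a_{n-1}\ge 0$. That lemma gives $H = U\Lambda U^{-1}$ with $U$ the all-ones upper-triangular matrix and $\Lambda=\operatorname{diag}(\lambda_1,\dots,\lambda_n)$, where $\lambda_k=-\sum_{i=k}^{n-1}a_i$ (so $\lambda_n=0$ and $\lambda_{k+1}-\lambda_k=a_k$). Then $e^{H}=Ue^{\Lambda}U^{-1}$, and right-multiplying $p'=q'e^{H}$ by $U$ turns it into $p'U = q'U e^{\Lambda}$. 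Since $(vU)_k=\sum_{i\le k}v_i$ for any row vector $v$, writing $P_k=\sum_{i\le k}p_i'$ and $R_k=\sum_{i\le k}q_i'$ the equation reads, coordinate by coordinate, $P_k = R_k e^{\lambda_k}$. This forces the unique choice $\lambda_k=\log(P_k/R_k)$, hence $a_k=\lambda_{k+1}-\lambda_k=\log\!\big(\tfrac{P_{k+1}/R_{k+1}}{P_k/R_k}\big)$; the normalization $\lambda_n=0$ is automatic because $P_n=R_n=1$.

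Third comes the admissibility check, which is where the lemmas pay off: $H$ is a genuine rate transition matrix exactly when all its off-diagonal entries $a_k$ are non-negative, and $a_k\ge 0$ is equivalent to $P_{k+1}/R_{k+1}\ge P_k/R_k$ — precisely the monotonicity of partial-sum ratios supplied by \Cref{supp: lemma:ineq_of_mass}. So the permutation $A$ produced in the first step is exactly what makes the candidate $H$ valid, and unwinding through $Q=AHA^{-1}$ finishes the proof.

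The main obstacle is the boundary of the probability simplex: the logarithms $\log(P_k/R_k)$ require all partial sums $P_k,R_k$ to be strictly positive, and if $p$ or $q$ places zero mass on the "wrong" coordinates the prescribed $\Lambda$ (hence $Q$) acquires $\pm\infty$ entries. I would deal with this either by stating the theorem for fully supported distributions — the generic case, and the one implicitly needed already in \Cref{supp: lemma:ineq_of_mass} to sort the ratios $p_i/q_i$ — or by a limiting argument: perturb $p,q$ to strictly positive distributions, solve, and let the perturbation vanish. Everything else is routine linear algebra once \Cref{supp: lemma: Q}, \Cref{supp: lemma:ineq_of_mass}, and \Cref{supp: lemma: rate} are in place, so the theorem is essentially the assembly of those three lemmas.
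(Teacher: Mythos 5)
Your proposal follows essentially the same route as the paper's own proof: permute via \Cref{lemma:ineq_of_mass}, diagonalize the upper-triangular candidate through the all-ones matrix $U$ as in \Cref{lemma: Q}, reduce to the diagonal equation on partial sums to read off $a_k = \ln\frac{P_{k+1}/R_{k+1}}{P_k/R_k} \ge 0$, and conjugate back using \Cref{lemma: rate}. Your added remark about strictly positive partial sums (needed for the logarithms and for sorting the ratios $p_i/q_i$) is a genuine edge case that the paper's proof also leaves implicit, and your suggested fixes (restrict to fully supported distributions or take a limit) are reasonable.
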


\begin{proof}
    By Lemma \ref{lemma:ineq_of_mass}, there exists permutation matrix $A$ which satisfies inequality \ref{eq:ineq_of_mass}, and we denote:
    $$
    p' := pA
    $$
    $$
    q' := qA
    $$

    Suppose:
    $$
    Q := A Q' A^{-1}
    $$, where $Q' = \begin{bmatrix}
    -\sum \limits_{i = 1}^{n - 1}a_i & a_1 & a_2 & \ldots & a_{n-2} & a_{n-1}\\
    0 & -\sum \limits_{i = 2}^{n - 1}a_i & a_2 & \ldots & a_{n-2} & a_{n-1}\\
    \ldots & \ldots & \ldots & \ldots & \ldots & \ldots \\
    0 & 0 & 0 & \ldots & - a_{n-1} & a_{n-1} \\
    0 & 0 & 0 & 0 & 0 & 0 
\end{bmatrix} = U \Lambda U^{-1}$, $U$ is all one upper triangle matrix, and $\Lambda = diag(\{-\sum \limits_{i = 1}^{n - 1}a_i, -\sum \limits_{i = 2}^{n - 1}a_i, \ldots, - a_{n-1}, 0\})$ 

Denote:
$$
p'' := p' U = [p_1', p_1' + p_2', \ldots, \sum \limits_{i = 1}^{n - 1}p_i', 1]
$$
$$
q'' := q' U = [q_1', q_1' + q_2', \ldots, \sum \limits_{i = 1}^{n - 1}q_i', 1]
$$

Thus the solution of $p = qe^Q$ can be obtained by solving:
$$
p'' = q'' e^{\Lambda}
$$, where $e^{\Lambda} = diag(\{e^{-\sum \limits_{i = 1}^{n - 1}a_i}, e^{-\sum \limits_{i = 2}^{n - 1}a_i}, \ldots, e^{- a_{n-1}}, 1\})$
Solving the equation:
$$
a_k = \ln \frac{\sum \limits_{i = 1}^{k + 1} p_i'}{\sum \limits_{i = 1}^{k + 1} q_i'} - \ln \frac{\sum \limits_{i = 1}^{k} p_i'}{\sum \limits_{i = 1}^{k} q_i'}
$$
and specifically,
$$
a_{n-1} = - \ln \frac{\sum \limits_{i = 1}^{n-1} p_i'}{\sum \limits_{i = 1}^{n-1} q_i'}
$$
By the inequality \ref{eq:ineq_of_mass} which $p', q'$ satisfies and the monotonicity of the $\ln()$ function, $a_k \geq 0, \forall k$, and thus $Q'$ is a rate transition matrix

Transfering the solution of $p'' = q''e^\Lambda$ back, we obtain:
$$
Q = AU\Lambda U^{-1} A^{-1} = A Q' A^{-1}
$$ 
and by Lemma \ref{lemma: rate}, Q is a rate transition matrix.
\end{proof}

\section{Proof of Supervision of \textit{Score}-learning}
\label{supp: sec: proof_supervision}
\begin{proposition}[Supervision of \textit{Score}-learning]
    Suppose $\mQ^{(t)}$'s elements are non-zeros, the training objective is depicted as in \Cref{eq: score entropy loss}, then the optimality of the score model $s_{\theta^*}(x_t,t)_b$ satisfies:
    $$
    s_{\theta^{*}}(x_t,t)_y = \mathbb{E}_{x_0 \sim \mu_{0|t}(\cdot|x_t)}[\frac{p_{t|0}(y|x_0)}{p_{t|0}(x_t|x_0)}] = \frac{\sum \limits_{x_0}\mu(x_0) p_{t|0}(y|x_0)}{\sum \limits_{x_0}\mu(x_0) p_{t|0}(x_t|x_0)}
    $$
\end{proposition}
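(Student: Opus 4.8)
The plan is to treat the score network as an unconstrained function — invoking the usual assumption that $s_\theta$ is expressive enough to realize any positive-valued map $(x_t,t,y)\mapsto s_\theta(x_t,t)_y$ — and to minimize $J_{score}$ pointwise. Since the time integral in \Cref{eq: score entropy loss} has a nonnegative integrand, it suffices to minimize the inner expectation for each fixed $t$; and since the outer expectation is a \emph{finite} sum over pairs $(x_0,x_t)$ with weights $\mu(x_0)p_{t|0}(x_t|x_0)$, there is no subtlety in exchanging summation and minimization. For a fixed triple $(t,x_t,y)$ with $y\neq x_t$, the scalar $s:=s_\theta(x_t,t)_y$ enters the objective only through
$$Q^{(t)}_{y,x_t}\sum_{x_0}\mu(x_0)\,p_{t|0}(x_t|x_0)\Bigl(s-\tfrac{p_{t|0}(y|x_0)}{p_{t|0}(x_t|x_0)}\log s\Bigr),$$
all remaining terms being independent of it. Hence the full objective separates into one-dimensional problems, one per $(t,x_t,y)$.

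Next I would carry out this one-dimensional minimization. Write $c_1:=\sum_{x_0}\mu(x_0)p_{t|0}(x_t|x_0)=p_t(x_t)$ and $c_2:=\sum_{x_0}\mu(x_0)p_{t|0}(x_t|x_0)\cdot\frac{p_{t|0}(y|x_0)}{p_{t|0}(x_t|x_0)}=\sum_{x_0}\mu(x_0)p_{t|0}(y|x_0)$. By the hypothesis that $Q^{(t)}$ has nonzero entries we have $Q^{(t)}_{y,x_t}>0$, so minimizing the displayed term over $s>0$ is equivalent to minimizing $f(s)=c_1 s-c_2\log s$. Then $f'(s)=c_1-c_2/s$ and $f''(s)=c_2/s^2>0$, so $f$ is strictly convex with unique stationary point $s^\star=c_2/c_1$, which is the global minimizer.

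Finally I would substitute back to obtain
$$s_{\theta^\star}(x_t,t)_y=\frac{c_2}{c_1}=\frac{\sum_{x_0}\mu(x_0)p_{t|0}(y|x_0)}{\sum_{x_0}\mu(x_0)p_{t|0}(x_t|x_0)},$$
which is the right-hand equality in the statement. For the expectation form, divide numerator and denominator by $p_t(x_t)$ and use $\mu_{0|t}(x_0|x_t)=\mu(x_0)p_{t|0}(x_t|x_0)/p_t(x_t)$, giving $\sum_{x_0}\mu_{0|t}(x_0|x_t)\frac{p_{t|0}(y|x_0)}{p_{t|0}(x_t|x_0)}=\mathbb{E}_{x_0\sim\mu_{0|t}(\cdot|x_t)}\bigl[\frac{p_{t|0}(y|x_0)}{p_{t|0}(x_t|x_0)}\bigr]$, completing the argument.

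The step requiring the most care — \textbf{the main obstacle} — is justifying the pointwise minimization, which is exactly where the hypotheses are consumed: (i) the expressiveness assumption on $s_\theta$ lets the infimum over networks equal the infimum over arbitrary positive functions; (ii) the nonzero-entries hypothesis gives $Q^{(t)}_{y,x_t}>0$, so the coefficient does not vanish and $s$ is genuinely constrained by the $\log$ term; and (iii) one must note $c_1=p_t(x_t)>0$ (a positive combination of the transition probabilities), so the minimizer $c_2/c_1$ is well defined. The remaining calculus is routine.
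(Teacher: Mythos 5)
Your proposal is correct and follows essentially the same route as the paper's proof: drop the terms of $J_{score}$ not depending on $s_\theta$, use $Q^{(t)}_{y,x_t}>0$ to reduce to a pointwise first-order condition in each scalar $s_\theta(x_t,t)_y$, and convert the resulting conditional expectation to the ratio of sums via Bayes' rule. Your explicit separation into one-dimensional strictly convex problems ($f(s)=c_1 s-c_2\log s$) is a slightly more careful justification of the stationarity step that the paper performs by differentiating under the integral, but it is the same argument in substance.
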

\begin{proof}
\begin{equation}
\begin{aligned}
J_{score} = & \int_{0}^{T} \E_{x_0 \sim \mu, x_t \sim p_{t|0}(x_t|x_0)}\bigl[\sum \limits_{y \neq x_t} \mQ^{(t)}_{y, x_t}\Biggl(s_\theta(x_t, t)_{y} - \frac{p_{t|0}(y|x_0)}{p_{t|0}(x_t|x_0)}\\
& + \frac{p_{t|0}(y|x_0)}{p_{t|0}(x_t|x_0)} \bigl( \log s_\theta(x_t, t)_y -\log(\frac{p_{t|0}(y|x_0)}{p_{t|0}(x_t|x_0)}) \bigl)\Biggl)\bigl] dt
\end{aligned}
\notag
\end{equation}
Therefore, with a little abuse of notation, we have

\begin{equation}
\begin{aligned}
\argmin \limits_{\theta} J_{score} = & \argmin \limits_{\theta} \int_{0}^{T} \E_{x_0 \sim \mu, x_t \sim p_{t|0}(x_t|x_0)}\bigl[\sum \limits_{b \neq x_t} \mQ^{(t)}_{y, x_t}\Biggl(s_\theta - 
 \frac{p_{t|0}(y|x_0)}{p_{t|0}(x_t|x_0)} \log s_\theta\Biggl)\bigl] dt \\
 = & \argmin \limits_{\theta} \underbrace{\int_{0}^{T} \E_{x_t \sim \mu_t}\bigl[\sum \limits_{y \neq x_t} \mQ^{(t)}_{y, x_t}\Biggl(s_\theta - \mathbb{E}_{x_0 \sim \mu_{0|t}}[\frac{p_{t|0}(y|x_0)}{p_{t|0}(x_t|x_0)}] \log s_\theta\Biggl)\bigl] dt}_\text{$\mathcal{L}$} \\
\end{aligned}
\notag
\end{equation}

$$
\frac{\partial \mathcal{L}}{\partial s_\theta} = \int_{0}^{T} \E_{x_t \sim \mu_t}\bigl[\sum \limits_{y \neq x_t} \mQ^{(t)}_{y, x_t}\Biggl( 1- \mathbb{E}_{x_0 \sim \mu_{0|t}}[\frac{p_{t|0}(y|x_0)}{p_{t|0}(x_t|x_0)}] \frac{1}{s_\theta}\Biggl)] dt
$$

As $\mQ^{(t)}$'s elements are non zeros, therefore 
$$\mQ^{(t)}_{y, x_t} > 0, \forall y \neq x_t$$ 
$$
\frac{\partial \mathcal{L}}{\partial s_\theta} = 0 \Longleftrightarrow 1- \mathbb{E}_{x_0 \sim \mu_{0|t}}[\frac{p_{t|0}(y|x_0)}{p_{t|0}(x_t|x_0)}] \frac{1}{s_\theta} = 0
$$
Therefore, the optimality of $s_\theta$ satisfies:
$$
s_{\theta^{*}}(x_t,t)_y = \mathbb{E}_{x_0 \sim \mu_{0|t}(\cdot|x_t)}[\frac{p_{t|0}(y|x_0)}{p_{t|0}(x_t|x_0)}]
$$
Furthermore, as $\mu_{0|t}(x_0|x_t) = \frac{\mu(x_0) p_{t|0}(x_t|x_0)}{\sum \limits_{x_0} \mu(x_0) p_{t|0}(x_t|x_0)}$, we have
$$
s_{\theta^{*}}(x_t,t)_y = \frac{\sum \limits_{x_0}\mu(x_0) p_{t|0}(y|x_0)}{\sum \limits_{x_0}\mu(x_0) p_{t|0}(x_t|x_0)}
$$
\end{proof}

\section{Proof of Convergence}
\label{sec: proof_convergence}
\subsection{Proof of Lemmas}
\begin{lemma}
\label{supp: lemma: KL monotonicity}
For a random variable $X_0 \in \mathbb{R}^n$ with arbitrary two distributions $p_0, p_0'$, the transition kernel is $p_{t|0}(x_t|x_0)$. We denote 
$$
p_t(x_t) := \sum \limits_{x_0} p_0(x_0) p_{t|0}(x_t|x_0)
$$
$$
p_t'(x_t) := \sum \limits_{x_0} p_0'(x_0) p_{t|0}(x_t | x_0)
$$
Then we have:
$$
D_{KL}(p_t || p_t') \leq D_{KL}(p_0 ||p_0')
$$
\end{lemma}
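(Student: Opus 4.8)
The plan is to recognize this as the data processing inequality for relative entropy and prove it directly via the log-sum inequality. First I would recall the log-sum inequality: for any finite collections of nonnegative reals $\{a_i\}$ and $\{b_i\}$,
$$\Bigl(\sum_i a_i\Bigr)\log\frac{\sum_i a_i}{\sum_i b_i} \;\le\; \sum_i a_i\log\frac{a_i}{b_i},$$
which is an immediate consequence of the convexity of $u\mapsto u\log u$ (equivalently, Jensen's inequality), using the usual conventions $0\log 0 = 0$ and $0\log(0/0)=0$. This is the only analytic input needed.

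Next, for each fixed target state $x_t$, I would apply the log-sum inequality with $a_{x_0} := p_0(x_0)\,p_{t|0}(x_t|x_0)$ and $b_{x_0} := p_0'(x_0)\,p_{t|0}(x_t|x_0)$. Then $\sum_{x_0} a_{x_0} = p_t(x_t)$ and $\sum_{x_0} b_{x_0} = p_t'(x_t)$ by definition, and the common factor $p_{t|0}(x_t|x_0)$ cancels inside the logarithm on the right-hand side, yielding
$$p_t(x_t)\log\frac{p_t(x_t)}{p_t'(x_t)} \;\le\; \sum_{x_0} p_0(x_0)\,p_{t|0}(x_t|x_0)\,\log\frac{p_0(x_0)}{p_0'(x_0)}.$$
Summing this over all $x_t$, interchanging the two finite sums on the right, and using that $p_{t|0}(\cdot|x_0)$ is a genuine probability distribution (so $\sum_{x_t} p_{t|0}(x_t|x_0) = 1$ for every $x_0$, which holds since it is the transition kernel of a \ac{CTDMC}), the right-hand side collapses to $\sum_{x_0} p_0(x_0)\log\frac{p_0(x_0)}{p_0'(x_0)} = D_{KL}(p_0\|p_0')$, which is exactly the claim.

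I do not anticipate a genuine obstacle; the only care required is with degenerate terms: if $p_0'(x_0)=0$ for some $x_0$ in the support of $p_0$ then $D_{KL}(p_0\|p_0')=+\infty$ and the inequality is trivial, and otherwise $p_t'(x_t)=0$ forces $a_{x_0}=0$ for all $x_0$, so those terms drop out under the stated conventions. If one prefers to avoid the log-sum inequality entirely, an equivalent short route is the chain rule for relative entropy applied to the two joint laws $P(x_0,x_t)=p_0(x_0)p_{t|0}(x_t|x_0)$ and $P'(x_0,x_t)=p_0'(x_0)p_{t|0}(x_t|x_0)$: decomposing in the order $x_0$ then $x_t$ gives $D_{KL}(P\|P')=D_{KL}(p_0\|p_0')$ since the conditionals of $x_t$ given $x_0$ coincide, while decomposing in the order $x_t$ then $x_0$ gives $D_{KL}(P\|P')=D_{KL}(p_t\|p_t')+\mathbb{E}_{x_t\sim p_t}[D_{KL}(p_{0|t}(\cdot|x_t)\|p_{0|t}'(\cdot|x_t))]\ge D_{KL}(p_t\|p_t')$ by nonnegativity of KL; comparing the two expressions yields the lemma.
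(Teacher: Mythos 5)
Your proposal is correct. Your primary argument — applying the log-sum inequality for each fixed $x_t$ with $a_{x_0}=p_0(x_0)p_{t|0}(x_t|x_0)$ and $b_{x_0}=p_0'(x_0)p_{t|0}(x_t|x_0)$, cancelling the common kernel factor inside the logarithm, and then summing over $x_t$ using $\sum_{x_t}p_{t|0}(x_t|x_0)=1$ — is a genuinely different and more elementary route than the paper's. The paper instead forms the two joint laws $p_{0,t}(x_0,x_t)=p_0(x_0)p_{t|0}(x_t|x_0)$ and $p_{0,t}'(x_0,x_t)=p_0'(x_0)p_{t|0}(x_t|x_0)$, observes that $D_{KL}(p_{0,t}\|p_{0,t}')=D_{KL}(p_0\|p_0')$ because the conditionals coincide, and then applies the chain rule for KL divergence in the other factorization order to drop a nonnegative conditional-KL term; this is exactly the ``equivalent short route'' you sketch in your final sentences. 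The trade-off: the chain-rule argument is shorter if one accepts the chain rule as a black box, while your log-sum derivation is self-contained from convexity alone and, as you note, makes the handling of degenerate terms ($p_0'(x_0)=0$ on the support of $p_0$, or $p_t'(x_t)=0$) fully explicit — a point the paper's proof glosses over. Either version suffices for the convergence theorem that consumes this lemma.
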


\begin{proof}
\begin{equation}
\begin{split}
D_{KL}(p_{0,t}(\cdot, \cdot) || p_{0,t}'(\cdot, \cdot)) & = \sum \limits_{x_0, x_t} p_{0,t}(x_0, x_t) \log \frac{p_{0,t}(x_0, x_t)}{p_{0,t}'(x_0, x_t)} \\
 & = \sum \limits_{x_0, x_t} p_{0,t}(x_0, x_t) \log \frac{p_{t|0}(x_t|x_0)p_0(x_0)}{p_{t|0}(x_t|x_0)p_0'(x_0)} \\
 & = D_{KL}(p_{0}|| p_{0}')
\end{split}
\notag
\end{equation}
Using the chain rule for KL divergence:
$$
D_{KL}(p_t ||p_t') = D_{KL}(p_{0,t}(x_0, x_t) || p_{0,t}'(x_0, x_t)) - \mathbb{E}_{p_t}[D_{KL}(p_{0|t}(x_0|x_t) || p_{0|t}'(x_0|x_t)]
$$
As KL divergence is greater than zero, we have:
$$
D_{KL}(p_t || p_t') \leq D_{KL}(p_{0,t}(x_0, x_t) || p_{0,t}'(x_0, x_t))  = D_{KL}(p_{0}|| p_{0}')
$$
\end{proof}

\subsection{Proof of the theorem}
\begin{theorem}[Convergence of the algorithm]
\label{supp: thm: convergence}
    If we assume optimality is achieved in every epoch of the forward process and the reverse process, and we denote the $k$-th epoch estimation of $\mu$ as $p_0$, then 
    $\lim_{k \to \infty} D_{KL}(\mu||p_0^{(k)})$ converges.

\end{theorem}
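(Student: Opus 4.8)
The plan is to prove that the real sequence $a_k := D_{KL}\bigl(\mu \,\|\, p_0^{(k)}\bigr)$ is non-increasing and bounded below by $0$, whence it converges by the monotone convergence theorem; the statement claims nothing about the value of the limit, so no fixed-point analysis is required. The whole argument amounts to a two-sided application, once per epoch, of the data-processing inequality for KL divergence --- which is exactly the content of \Cref{supp: lemma: KL monotonicity} --- wrapped around an identification of what the optimal backward chain actually computes.

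First I would fix an epoch $k$ and name the objects. \emph{Matrix}-learning returns a rate matrix $Q^{(k)}$; write $P^{(k)} := \exp\bigl\{\int_0^T \sigma(s)\,ds\; Q^{(k)}\bigr\}$ for the induced forward transition kernel, which maps probability vectors to probability vectors by \Cref{prop: conservation}. Set $\nu^{(k)} := \mu\,P^{(k)}$, the terminal law of the forward chain launched from the \emph{data} distribution, and $\pi^{(k)} := p_0^{(k)}\,P^{(k)}$, the prior actually used to initialize sampling, since the \emph{Matrix}-learning phase of \Cref{algo: training} sets $p_T$ to the current estimate $p_0^{(k)}$ evolved to time $T$. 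Let $R^{(k)}$ denote the backward solution operator: the time-inhomogeneous Markov kernel obtained by integrating the reverse Kolmogorov equation of \Cref{thm: reversibility} (equivalently \Cref{eq: reverse}) from time $T$ down to $0$ with the rate matrix $\hat Q^{(t)}$ assembled from $Q^{(k)}$ and the learned score $s_{\theta^\ast}$ via \Cref{eq: reverse rate transition matrix}. In the idealized regime in which \Cref{eq: mu estimation} is solved exactly, $p_0^{(k+1)} = R^{(k)}\bigl(\pi^{(k)}\bigr)$.

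The crux is to identify $R^{(k)}$ on the input $\nu^{(k)}$. By the Supervision of \emph{Score}-learning proposition (\Cref{supp: sec: proof_supervision}), at \emph{Score}-learning optimality $s_{\theta^\ast}(x,t)_y = p^{\mu}_t(y)/p^{\mu}_t(x)$, where $p^{\mu}_t(x) = \sum_{x_0}\mu(x_0)\,p_{t|0}(x\mid x_0)$ is the marginal of the forward chain started from $\mu$; hence $\hat Q^{(t)}$ is precisely the reverse rate matrix of that $\mu$-started chain, so \Cref{thm: reversibility} gives $R^{(k)}\bigl(\nu^{(k)}\bigr) = \mu$ --- the backward pass exactly inverts the $\mu$-chain, no matter which $Q^{(k)}$ was learned. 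This is the sense in which the discrepancy noted before the theorem --- that $s_\theta$ is supervised by $\mu$ rather than by $p_0^{(k)}$ --- makes the iteration contractive rather than obstructing it: the only error at epoch $k$ is that $R^{(k)}$ was fed $\pi^{(k)}$ in place of $\nu^{(k)}$. Applying \Cref{supp: lemma: KL monotonicity} to the backward kernel and then to the forward kernel $P^{(k)}$ yields
\[
D_{KL}\bigl(\mu \,\|\, p_0^{(k+1)}\bigr) = D_{KL}\bigl(R^{(k)}\nu^{(k)} \,\|\, R^{(k)}\pi^{(k)}\bigr) \le D_{KL}\bigl(\nu^{(k)} \,\|\, \pi^{(k)}\bigr) = D_{KL}\bigl(\mu P^{(k)} \,\|\, p_0^{(k)}P^{(k)}\bigr) \le D_{KL}\bigl(\mu \,\|\, p_0^{(k)}\bigr),
\]
so $a_{k+1}\le a_k$; together with $a_k\ge 0$ and $a_0<\infty$ (the initial $p_0^{(0)}$ has full support, and each $p_0^{(k)}$ inherits full support from the strictly positive backward kernel), the sequence $\{a_k\}$ converges.

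The main obstacle --- the only step beyond bookkeeping --- is justifying that $R^{(k)}$ is a genuine Markov operator, so that the data-processing inequality applies: one must check that $\hat Q^{(t)}$ is a valid rate matrix along the entire trajectory (non-negative off-diagonals, vanishing row sums), which in turn needs the off-diagonal entries of $Q^{(k)}$ to be positive so that the Supervision proposition is applicable and $\hat Q^{(t)}$ is well defined. Relatedly, the theorem should be read in the idealized regime where the sampling dynamics are solved exactly (no Euler or Monte-Carlo error), matching the standing ``optimality in every epoch'' hypothesis, and in high dimensions the same squeeze is run coordinate-wise under the factorization assumptions of \Cref{subsec: high dimensional data}. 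Note that the argument never uses \Cref{thm: accessibility} or the optimality of \emph{Matrix}-learning: convergence holds for an arbitrary sequence of learned forward kernels, and those ingredients instead control how close the limit can be pushed toward $\mu$, not whether a limit exists.
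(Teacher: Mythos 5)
Your proposal is correct and follows essentially the same route as the paper: both arguments hinge on the two identities $\mu = \mu P^{(k)} R^{(k)}$ and $p_0^{(k+1)} = p_0^{(k)} P^{(k)} R^{(k)}$ at optimality, followed by two applications of the data-processing inequality (\Cref{supp: lemma: KL monotonicity}) to get monotone decrease of the KL divergence, which is bounded below by zero. Your additional care in deriving the first identity from the Supervision proposition and in checking that the backward solution operator is a genuine Markov kernel fills in details the paper leaves implicit, but the structure of the argument is the same.
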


\begin{proof}


According to the assumption that each subprocess reaches its optimum,
$$
\mu = \mu p_{T|0}^{(k)} p_{0|T}^{(k); \leftarrow}
$$
$$
p_0^{(k+1)} = p_0^{(k)}p_{T|0}^{(k)} p_{0|T}^{(k); \leftarrow}
$$
Therefore, by using Lemma \ref{supp: lemma: KL monotonicity} twice:
$$
D_{KL}(\mu || p_0^{(k)}) \geq D_{KL}( \mu p_{T|0}^{(k)} || p_0^{(k)}p_{T|0}^{(k)}) \geq D_{KL}(\mu p_{T|0}^{(k)} p_{0|T}^{(k); \leftarrow} || p_0^{(k)}p_{T|0}^{(k)} p_{0|T}^{(k); \leftarrow})
$$
Therefore,
$$
D_{KL}(\mu||p_0^{(k)}) \geq D_{KL}(\mu||p_0^{(k+1)})
$$
As KL divergence is greater than zero, then \[\lim_{k \to \infty} D_{KL}(\mu||p_0^{(k)})\] converges.
\end{proof}





\section{Derivation of Matrix Exponential Calculation}
\label{supp: proof of matrix exponential calculation}
\begin{proposition}
    For a matrix $Q  \in \mathbb{R}^{n \times n}$ and a non-degenerate matrix $D \in \mathbb{R}^{n \times n}$, we have:
    $$
    \exp\{DQD^{-1}\} = D \exp\{Q\}D^{-1}
    $$
\end{proposition}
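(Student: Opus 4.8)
The plan is to argue directly from the Taylor-series definition of the matrix exponential, namely $\exp\{M\} = \sum_{k=0}^{\infty} \frac{1}{k!} M^k$, which converges absolutely for every $M \in \mathbb{R}^{n \times n}$ in any submultiplicative matrix norm. First I would establish the key algebraic identity $(DQD^{-1})^k = D Q^k D^{-1}$ for all $k \geq 0$ by induction on $k$: the base case $k=0$ is $I = D I D^{-1}$, and the inductive step uses $(DQD^{-1})^{k+1} = (DQD^{-1})(DQ^kD^{-1}) = DQ(D^{-1}D)Q^kD^{-1} = DQ^{k+1}D^{-1}$, where the inner $D^{-1}D$ collapses to the identity.

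Next I would substitute this identity term-by-term into the defining series:
\[
\exp\{DQD^{-1}\} = \sum_{k=0}^{\infty} \frac{1}{k!} (DQD^{-1})^k = \sum_{k=0}^{\infty} \frac{1}{k!} D Q^k D^{-1}.
\]
Then I would pull the fixed matrices $D$ and $D^{-1}$ outside the sum. This is the only step requiring a word of justification: the maps $X \mapsto DX$ and $X \mapsto XD^{-1}$ are linear and continuous on $\mathbb{R}^{n \times n}$ (bounded, with operator norm at most $\|D\|$ and $\|D^{-1}\|$ respectively), so they commute with the convergent infinite sum. Hence
\[
\sum_{k=0}^{\infty} \frac{1}{k!} D Q^k D^{-1} = D \left( \sum_{k=0}^{\infty} \frac{1}{k!} Q^k \right) D^{-1} = D \exp\{Q\} D^{-1},
\]
which is the claim.

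I do not expect any genuine obstacle here; the result is elementary. The only point that deserves care is the interchange of the limit (the infinite series) with left/right multiplication, and this is handled cleanly by absolute convergence of the exponential series together with the boundedness of the linear operators $X \mapsto DXD^{-1}$. Alternatively, one could avoid even that remark by working with partial sums $S_N = \sum_{k=0}^{N} \frac{1}{k!}(DQD^{-1})^k = D\big(\sum_{k=0}^{N}\frac{1}{k!}Q^k\big)D^{-1}$ and then taking $N \to \infty$ on both sides, using continuity of matrix multiplication in finite dimensions.
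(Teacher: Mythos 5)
Your proof is correct and follows essentially the same route as the paper's: expand the exponential as its power series, use the telescoping identity $(DQD^{-1})^k = DQ^kD^{-1}$, and factor $D$ and $D^{-1}$ out of the sum. You are in fact slightly more careful than the paper, which omits the $\frac{1}{k!}$ factors in its displayed series and does not remark on the interchange of the infinite sum with left/right multiplication.
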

\begin{proof}
    According to the definition of matrix exponential,
    $$
    \exp\{DQD^{-1}\} = I + \sum_{i=1}^{\infty} (DQD^{-1})^i
    $$
    As $(DQD^{-1})^i = DQ^iD^{-1}$, 
    $$
    \exp\{DQD^{-1}\} = I + \sum_{i=1}^{\infty}DQ^iD^{-1} = D(I + \sum_{i=1}^\infty Q^i)D^{-1} = D\exp\{Q\}D^{-1}
    $$
\end{proof}
\section{Derivation of KL term calculation proposition}
\label{supp: proof of KL term cal}
The full bound \citep{meng2023concretescorematchinggeneralized, campbell2022continuous} is as follows:

$$
\mathbb{E}_{x_0 \sim \mu}[-\log p_{0; \theta}(x_0)] \leq J_{score} + \mathbb{E}_{x_0 \sim \mu} [D_{KL}\bigl(p_{T|0; \alpha}(x_T|x_0) || \phi \bigl)]
$$
, where
\begin{equation}
\begin{aligned}
J_{score} \triangleq & \int_{0}^{T} \E_{x_0 \sim \mu, x_t \sim p_{t|0}(x_t|x_0)}\bigl[\sum \limits_{b \neq x_t} Q^{(t)}_{b, x_t}\Biggl(s_\theta(x_t, t)_{b} - \frac{p_{t|0}(b|x_0)}{p_{t|0}(x_t|x_0)}\\
& + \frac{p_{t|0}(b|x_0)}{p_{t|0}(x_t|x_0)} \bigl( \log s_\theta(x_t, t)_b -\log(\frac{p_{t|0}(b|x_0)}{p_{t|0}(x_t|x_0)}) \bigl)\Biggl)\bigl] dt
\end{aligned}
\notag
\end{equation}
However, unlike previous works, the second term, which is the $KL$ term should be considered, and it seems impossible to compute. Fortunately, certain characteristics of the \textit{Matrix}-learning process can be used to justify a computable form for the second term. Suppose the text sequence holds $d$ dimensions, $i.e. x \in \mathbb{R}^{d}$, then the characteristics can be described as follows:
\begin{itemize}
    \item Independent Evolution:
$$
p_{T|0; \alpha}(x_T|x_0) = \prod_{i=1}^{d}p_{T|0; \alpha}(x_T^{(i)} | x_0^{(i)})
$$
\item Independent Terminal:
$$
\phi(x_T) = \prod_{i = 1}^d p_T(x_T^{(i)})
$$

\end{itemize}

As a result, we provide a computable form for the KL term.

\begin{proposition}
   $ D_{KL}\bigl(p_{T|0;\alpha}(x_T|x_0) || p_T(x_T) \bigl) = \sum \limits_{i = 1}^{d}D_{KL}\bigl(p_{T|0;\alpha}(x_T^{(i)} | x_0^{(i)}) || p_T(x_T^{(i)}) \bigl)$
\end{proposition}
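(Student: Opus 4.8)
The plan is to prove the claimed factorization of the KL divergence directly from the two independence assumptions, using only the definition of KL divergence and the factorization of the logarithm of a product. This is a short calculation, so the main work is organizing it cleanly rather than surmounting any conceptual obstacle.

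First I would write out the KL divergence by its definition,
\[
D_{KL}\bigl(p_{T|0;\alpha}(x_T|x_0) \,\|\, p_T(x_T)\bigr) = \sum_{x_T} p_{T|0;\alpha}(x_T|x_0) \log \frac{p_{T|0;\alpha}(x_T|x_0)}{p_T(x_T)},
\]
where the sum runs over all $x_T \in \mathbb{X}^d$. Next I would substitute the two assumptions from \Cref{subsec: high dimensional data}: the Independent Evolution assumption $p_{T|0;\alpha}(x_T|x_0) = \prod_{i=1}^d p_{T|0;\alpha}(x_T^{(i)}|x_0^{(i)})$ and the Independent Terminal assumption $p_T(x_T) = \prod_{i=1}^d p_T(x_T^{(i)})$. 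The ratio inside the logarithm then becomes a product of per-dimension ratios, so the logarithm turns into the sum $\sum_{i=1}^d \log \bigl(p_{T|0;\alpha}(x_T^{(i)}|x_0^{(i)}) / p_T(x_T^{(i)})\bigr)$.

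Then I would interchange the sum over $x_T \in \mathbb{X}^d$ with the sum over dimensions $i$, and for each fixed $i$ factor the joint sum over $x_T$ as $\sum_{x_T^{(i)}} \bigl(\sum_{x_T^{(j)}: j\neq i}\bigr)$. Using the Independent Evolution assumption again to factor $p_{T|0;\alpha}(x_T|x_0)$, the inner sum over the coordinates $x_T^{(j)}$ with $j \neq i$ is a product of marginals each summing to $1$, so it collapses, leaving exactly $\sum_{x_T^{(i)}} p_{T|0;\alpha}(x_T^{(i)}|x_0^{(i)}) \log \bigl(p_{T|0;\alpha}(x_T^{(i)}|x_0^{(i)}) / p_T(x_T^{(i)})\bigr) = D_{KL}\bigl(p_{T|0;\alpha}(x_T^{(i)}|x_0^{(i)}) \,\|\, p_T(x_T^{(i)})\bigr)$. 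Summing over $i$ yields the claim.

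The only place requiring care — and the closest thing to an obstacle — is justifying the interchange of summations and the collapse of the $j \neq i$ marginals; this is routine since all sums are finite (the state space is finite and $d$ is finite) and each factor is a genuine probability distribution summing to one, so no convergence or positivity subtleties arise. This is the standard additivity-of-KL-over-independent-product-distributions fact, specialized to the case where the first argument is a product conditional and the second a product marginal.
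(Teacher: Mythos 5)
Your proof is correct and follows essentially the same route as the paper's: write out the KL by definition, substitute the Independent Evolution and Independent Terminal assumptions to turn the log of the ratio into a sum over dimensions, swap the order of summation, and marginalize out the coordinates $j \neq i$ to collapse each term to a per-dimension KL divergence. Your explicit remark on justifying the interchange and the collapse of the marginals is a small addition of rigor the paper leaves implicit, but the argument is identical in substance.
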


\begin{proof}
By independent evaluation and independent terminal, we have
\begin{equation}
    \begin{aligned}
        D_{KL}\bigl(p_{T|0;\alpha}(x_T|x_0) || p_T(x_T) \bigl) &= \sum \limits_{x_T} p_{T|0; \alpha}(x_T | x_0) \log \frac{p_{T|0; \alpha} (x_T | x_0)}{\phi} \\
        &= \sum \limits_{x_T^{(1)}, x_T^{(2)}, \ldots, x_T^{(d)}}  p_{T|0;\alpha}(x_T | x_0) \sum \limits_{i=1}^d\log \frac{p_{T|0;\alpha}(x_T^{(i)} | x_0^{(i)})}{p_T(x_T^{(i)})} \\
        &= \sum \limits_{i=1}^d \sum \limits_{x_T^{(1)}, x_T^{(2)}, \ldots, x_T^{(d)}}  p_{T|0;\alpha}(x_T | x_0) \log \frac{p_{T|0;\alpha}(x_T^{(i)} | x_0^{(i)})}{p_T(x_T^{(i)})}\\
        &= \sum \limits_{i=1}^d \sum \limits_{x_T^{(i)}}p_{T|0;\alpha} (x_T^{(i)} | x_0^{(i)}) \log \frac{p_{T|0;\alpha}(x_T^{(i)} | x_0^{(i)})}{p_T(x_T^{(i)})} \\
        &= \sum \limits_{i=1}^d D_{KL}\bigl(p_{T|0; \alpha} (x_T^{(i)}|x_0^{(i)}) || p_T(x_T^{(i)} \bigl)\\
    \end{aligned}
    \notag
\end{equation}
\end{proof}

\section{Additional Experimental details}
\label{supp_sec: exp details}
\subsection{Model Details}
In terms of text modeling, for \emph{Matrix}-learning, the $Q_\alpha$ matrix is initialized as follows:
$$
a_i = 0, \forall i = 1,2,3,\ldots, n-2
$$
$$
a_{n-1} = 1
$$
The model is kept the same as SEDD \cite{lou2024discrete}. 

As for image modeling, for \emph{Matrix}-learing, the $Q_\alpha$ matrix is initialized as follows:
$$
a_i = 1e-5, \forall i = 1,2,3,\ldots, n-2
$$
The model is kept the same as SEDD \cite{lou2024discrete}. 
\subsection{Training Details}
The model is trained with a batch size of $512$ and trained with a learning rate of $3 \times 10^{-4}$ (Adam optimizer) on $8$ 4090 24GB GPUs. Both the \emph{Matrix}-learning as well as the \emph{Score}-learning are trained with the AdamW \cite{loshchilov2019decoupledweightdecayregularization}. 
Training start with a weight decay factor 0.01, which then turn to 0 in the 7,900,000 step for text8.



\section{Limitations and Societal Impact}
\label{supp_sec: limitations_and_societal_impact}
In this work, the \ac{CTDMB} framework primarily relies on the evidence lower bound (ELBO) for both training and evaluation. However, given that \Cref{thm: convergence} is not dependent on the specific form of the loss function, it is theoretically possible to derive other bounds for training. This flexibility opens new avenues for optimizing \ac{CTDMB} under different theoretical and practical settings. Furthermore, we haven't provided a theorem focusing on optimality, which may be done for future work. As for societal impact, our work focus on foundation learning algorithms, which doesn't hold direct societal impact.
\end{appendices}

\end{document}